\newcolumntype{P}[1]{>{\centering\arraybackslash}p{#1}}
\newtheorem{theorem}{Theorem}
\newtheorem{lemma}{Lemma}
\newtheorem{corollary}{Corollary}
\theoremstyle{definition}
\newtheorem{assumption}{Assumption}
\theoremstyle{remark}
\newtheorem{remark}{Remark}
\begin{document}

\title{SignSGD with Federated Voting}

\author{Chanho Park, \IEEEmembership{Student Member, IEEE}, H. Vincent Poor, \IEEEmembership{Life Fellow, IEEE}, \\ and Namyoon Lee, \IEEEmembership{Senior Member, IEEE}


\thanks{C. Park is with the Department of Electrical Engineering, POSTECH, Pohang 37673, South Korea (e-mail: chanho26@postech.ac.kr).}

\thanks{H. V. Poor is with the Department of Electrical and Computer Engineering, Princeton University, Princeton, NJ 08544, USA (e-mail: poor@princeton.edu).}

\thanks{N. Lee is with the School of Electrical Engineering, Korea University, Seoul 02841, South Korea (e-mail: namyoon@korea.ac.kr).}
}

\markboth{Journal of \LaTeX\ Class Files,~Vol.~14, No.~8, August~2021}%
{Shell \MakeLowercase{\textit{et al.}}: A Sample Article Using IEEEtran.cls for IEEE Journals}


\maketitle

\begin{abstract}
Distributed learning is commonly used for accelerating model training by harnessing the computational capabilities of multiple-edge devices. However, in practical applications, the communication delay emerges as a bottleneck due to the substantial information exchange required between workers and a central parameter server. SignSGD with majority voting (signSGD-MV) is an effective distributed learning algorithm that can significantly reduce communication costs by one-bit quantization. However, due to heterogeneous computational capabilities, it fails to converge when the mini-batch sizes differ among workers. To overcome this, we propose a novel signSGD optimizer with \textit{federated voting} (signSGD-FV). The idea of federated voting is to exploit learnable weights to perform weighted majority voting. The server learns the weights assigned to the edge devices in an online fashion based on their computational capabilities. Subsequently, these weights are employed to decode the signs of the aggregated local gradients in such a way to minimize the sign decoding error probability. We provide a unified convergence rate analysis framework applicable to scenarios where the estimated weights are known to the parameter server either perfectly or imperfectly. We demonstrate that the proposed signSGD-FV algorithm has a theoretical convergence guarantee even when edge devices use heterogeneous mini-batch sizes. Experimental results show that signSGD-FV outperforms signSGD-MV, exhibiting a faster convergence rate, especially in heterogeneous mini-batch sizes.
\end{abstract}

\begin{IEEEkeywords}
Distributed learning, sign stochstic gradient descent (signSGD), federated voting.
\end{IEEEkeywords}

\section{Introduction} \label{sec:introduction}



\IEEEPARstart{A}{s} machine learning transitions from traditional centralized data-center configurations to operating on edge devices, innovative approaches for distributed learning are gaining attention \cite{dean2012large, bottou2010large, mcmahan2017communication}. Distributed learning aims to optimize a model for multiple local datasets stored on the edge devices, with the most typical structure being multiple devices communicating with one parameter server to aggregate the locally computed model update information \cite{mcmahan2017communication}. In each iteration, edge devices train a local model using their local data and then transmit their local models or local gradients to a central server, known as a parameter server \cite{dean2012large}. The server subsequently aggregates these incoming model updates (or local gradients) and returns its result to the edge devices. Consequently, in the implementation of distributed learning, the need for edge devices to exchange their trained model parameters or gradients is sufficient, eliminating the need to share their local data with the server. Moreover, the distributed learning system can improve its convergence rate by employing an SGD optimizer \cite{liu2020distributed}. This inherent characteristic of distributed learning fully demonstrates its practicality, and such learning algorithms have recently been utilized in various fields \cite{xu2021federated, hard2018federated}.


In practice, implementing distributed learning algorithms is a challenging task. One primary challenge is the communication bottleneck, which stems from the substantial amount of information that must be exchanged between the edge devices and the parameter server using wireline or wireless networks with limited bandwidth. In recent years, several techniques aimed at alleviating this communication burden have been developed \cite{cao2023communication, chen2021communication}. For example, one strategy for mitigating these communications costs is for the edge devices to compress their local gradients before transmission using techniques such as quantization or sparsification. Despite notable progress in addressing the communication bottleneck in distributed learning, there remains a significant gap in our understanding of the impact of computing heterogeneity across edge devices in this context.

Edge devices, such as sensors or personal computers, are inherently heterogeneous in terms of their computing capabilities. For synchronous stochastic gradient computation, for example, personal computers can use a large mini-batch size. In contrast, Internet of Things (IoT) sensors may employ a significantly smaller mini-batch size. The variance in computing capabilities among edge devices presents a notable obstacle when applying communication-efficient distributed learning algorithms that leverage compression techniques such as quantization. This challenge arises due to the diverse effects of quantization errors in the locally computed stochastic gradient across different edge devices. This variation in quantization errors significantly impacts gradient aggregation at the parameter server, causing a slowdown in convergence speed or even potential divergence. For example, signSGD with majority voting (signSGD-MV) is a simple yet communication-efficient distributed learning algorithm introduced in \cite{bernstein2018asignsgd,bernstein2018bsignsgd}. This algorithm offers a remarkable 32-fold reduction in communication costs per iteration compared to full-precision, distributed SGD. However, it was shown to diverge when the edge devices utilize varying sizes of mini-batches or for smaller mini-batch configurations \cite{karimireddy2019error}.

In this paper, we provide a comprehensive analysis of the impact of computational heterogeneity when using signSGD-MV type algorithms. In particular, we present signSGD with \textit{federated voting} (signSGD-FV) as a communication-efficient and robust distributed learning method for computational heterogeneity. The main idea of signSGD-FV is to employ learnable weights when performing majority voting for gradient aggregation. Unlike signSGD-MV, which uses equal weights, our approach uses trainable weight parameters, adapting during learning to prioritize edge devices with high computational capabilities, particularly those using large mini-batch sizes. Our main finding is that signSGD-FV has a theoretical convergence guarantee, even with heterogeneous mini-batch sizes and imperfect knowledge of the weights under some conditions. Using experiments, we verify that signSGD-FV outperforms the existing distributed learning algorithms using heterogeneous batch sizes across edge devices.

\subsection{Related Works}

{\bf Gradient compression:}
As noted above, one representative lossy compression method is the gradient quantization, which includes scalar quantization \cite{seide20141, alistarh2017qsgd, wen2017terngrad, bernstein2018asignsgd, lee2020bayesian, park2021bayesian}, vector quantization \cite{shlezinger2020uveqfed, gandikota2021vqsgd, eldar2022machine}, gradient differential coding \cite{mishchenko2019distributed, gorbunov2021marina} and adaptive quantization techniques \cite{jhunjhunwala2021adaptive, honig2022dadaquant}. Gradient sparsification is an alternative approach introduced in \cite{wangni2018gradient}. The underlying idea of sparsification is to select the largest components in magnitude of the locally computed stochastic gradient for transmission. Selection criteria include Top-K SGD \cite{shi2019understanding}, Sparsified-SGD \cite{stich2018sparsified}, and some variations \cite{rothchild2020fetchsgd, sahu2021rethinking, shanbhag2018efficient, li2022near, lin2017deep, han2020adaptive}. Recently, advanced lossy compression techniques have been proposed by jointly applying quantization and sparsification, examples of which include TernGrad \cite{wen2017terngrad}, Qsparse-local-SGD \cite{basu2019qsparse}, sparse ternary compression \cite{sattler2019robust}, and sparse-signSGD (${\sf S}^3$GD) \cite{park2023s, park2023sparse}.

{\bf Computational heterogeneity:}
The computational heterogeneity inherent in the edge devices in some applications presents a formidable challenge when executing distributed optimization for these applications \cite{pfeiffer2023federated}. In scenarios where synchronous aggregation of local gradients or models is employed, the server must await the completion of local updates by workers with limited computational capabilities before initiating the aggregation process. This waiting period can introduce significant delays in the overall optimization process. Numerous strategies have been proposed to address this issue, with the asynchronous distributed learning approach with multiple local updates emerging as the most widely adopted method \cite{recht2011hogwild, zhang2015deep, xu2023asynchronous, mcmahan2017communication, wang2021novel}. The most relevant prior studies involve the utilization of adaptive mini-batch sizes tailored to the computing capabilities of individual edge devices \cite{ferdinand2020anytime, shi2022talk, park2022amble, ma2023adaptive, liu2023dynamite}. While this approach effectively manages computational diversity, it necessitates high communication costs for transmitting high-dimensional vectors without implementing quantization. However, when quantization is applied to mitigate communication costs in heterogeneous settings, many existing algorithms can fail to converge to local optima due to the distinct errors introduced by the quantization process.

{\bf Weighted majority voting:} Weighted majority voting has been applied in diverse applications, such as communication systems \cite{hong2017weighted, kim2019supervised}, crowdsourcing \cite{li2014error, kim2023worker}, and ensemble learning \cite{berend2015finite, kim2023distributed}. Notably, in communication systems, weighted majority voting is an optimal decoder, particularly in scenarios where a transmitter sends binary information with repetition codes over parallel binary symmetric channels (BSCs). The optimal weights are determined by the log-likelihood ratio (LLR) and are obtained as a function of cross-over probabilities for the BSCs \cite{jeon2018one, kim2019supervised}. In federated learning, weighted majority voting has become an effective strategy for enhancing model aggregation performance. This approach addresses the heterogeneity of data distributions \cite{mcmahan2017communication, wu2021fast, li2023revisiting, chen2020focus, guo2020secure}. For example, in \cite{wu2021fast}, the parameter server assigns weights based on the cosine similarity between the previously aggregated gradient and the locally computed gradient of each edge device. Notably, while effective, these approaches often overlook the joint consideration of gradient quantization effects and the underlying majority voting principle.

\subsection{Contributions}

This paper considers a distributed learning problem in which edge devices collaboratively optimize neural network model parameters by communicating with a shared parameter server. In each iteration, edge devices compute stochastic gradient vectors with varying mini-batch sizes tailored to their computing capabilities, facilitating synchronous updates. Subsequently, these devices perform one-bit quantization to compress the gradient information, which is then transmitted to the parameter server through a band-limited communication channel. The parameter server aggregates the binary stochastic gradient vectors using a weighted majority voting rule. In this problem, our main contribution is to provide a comprehensive analysis of the impact of gradient quantization alongside the principles of weighted majority voting. We summarize our main contribution as follows:

\begin{itemize}

    \item We propose a novel perspective on the gradient sign aggregation challenge in distributed signSGD with varying batch sizes. Our approach is to treat the binary source aggregation problem as a decoding problem for a repetition code over parallel BSCs with distinct cross-over probabilities. We first present an optimal weighted majority voting aggregation, assuming perfect knowledge of cross-over probabilities at the server. However, obtaining such perfect knowledge is challenging in practice. To address this, we introduce \textit{federated voting} (FV), a variant of weighted majority voting with learnable weights. We term this approach signSGD-FV. The key concept behind signSGD-FV lies in executing a weighted majority voting process. In this approach, the weights undergo recursive updates, driven by comparing the local gradient signs transmitted from the edge devices and the global gradient signs decoded by the server. This adaptability empowers the weights to assign greater importance to workers with high computational capabilities, particularly those employing large mini-batch sizes.

    \item We present a unified convergence analysis for signSGD with an arbitrary binary decoding rule. Our analysis framework reveals that the convergence rate of signSGD with weighted majority voting can be accelerated as the decoding error probability of the aggregated sign information decreases, aligning with our intuitive expectations. This analysis establishes an upper bound on the decoding error probability of signSGD when federated voting is applied. Our primary finding is that signSGD-FV provides a theoretical guarantee of convergence, even when workers employ heterogeneous mini-batch sizes.

    \item Through simulations, we show the enhanced performance of signSGD-FV compared to existing algorithms. Our simulations involve training convolutional neural networks (CNNs) and ResNet-56 for image classification tasks, employing the MNIST, CIFAR-10, and CIFAR-100 datasets. The results highlight the superiority of signSGD-FV over signSGD-MV, particularly in scenarios where the mini-batch sizes of workers vary. Furthermore, we compare the signSGD-FV algorithm with conventional SGD-type algorithms using full precision. Our findings demonstrate that signSGD-FV achieves a remarkable 32x reduction in communication costs while preserving convergence properties, even when dealing with heterogeneous mini-batch sizes across edge devices.

\end{itemize}

\section{Preliminaries}
In this section, we briefly review the existing signSGD-MV algorithm and a related algorithm, signGD-MV, which helps to understand an upper bound on the convergence performance of signSGD-MV.

\subsection{SignSGD-MV}
We first review the signSGD-MV algorithm in a distributed learning problem. Let $\mathbf{x}=[x_1,x_2,\ldots,x_N] \in \mathbb{R}^N$ be a model parameter of a deep neural network (DNN) of dimension $N$. The mathematical formulation of machine learning in this case can be defined as the following optimization problem:
\begin{align}
    \min_{\mathbf{x} \in \mathbb{R}^N} f(\mathbf{x}) := \mathbb{E}_{\mathbf{d} \sim \mathcal{D}} \left[ F (\mathbf{x}; \mathbf{d}) \right], \label{eq:optimization}
\end{align}
where $\mathbf{d}$ is a data sample randomly drawn from a dataset $\mathcal{D}$, and $F(\cdot)$ is a task-dependent empirical loss function, which can be non-convex. In a distributed learning scenario with $M$ workers, the optimization problem in \eqref{eq:optimization} simplifies to
\begin{align}
    \min_{\mathbf{x} \in \mathbb{R}^N} f(\mathbf{x}) :=\frac{1}{M} \sum_{m=1}^M f_m({\bf x}),\label{eq:dis_optimization}
\end{align}
where $f_m({\bf x})=\mathbb{E}_{\mathbf{d} \sim \mathcal{D}_m} \left[ F (\mathbf{x}; \mathbf{d}) \right]$ is a local loss function for worker $m\in [M]$. This local loss function is computed by using data samples from a local data set $\mathcal{D}_m$ assigned to worker $m$, which is a subset of a global data set $\mathcal{D}=\bigcup_{m=1}^M\mathcal{D}_m$.

{\bf Stochastic gradient computation:} At iteration $t$, worker $m \in [M]$ computes a local stochastic gradient using the model parameter ${\bf x}^t$ and randomly selected mini-batch data samples $\mathcal{B}_m^t\subset \mathcal{D}_m$ as 
\begin{align}
    {\bf g}^t_m:= \frac{1}{B_m} \sum_{\mathbf{d} \in \mathcal{B}_m^t} \nabla F \left( \mathbf{x}^t; \mathbf{d} \right) \in \mathbb{R}^N,\label{eq:SG}
\end{align}
where $B_{m}=|\mathcal{B}_m^t|$ is a fixed mini-batch size of worker $m$.

{\bf Sign quantization:} Worker $m\in [M]$ performs one-bit gradient quantization to reduce the communication cost. Let $g_{m,n}^t$ be the $n$th coordinate of ${\bf g}^t_m$. Then, the sign of $g_{m,n}^t$ is computed as
\begin{align}
    Y_{m,n}^t = {\sf sign}\left(g_{m,n}^t\right)\in \{-1,+1\}, \label{eq:signSG}
\end{align}
for $n\in [N]$. This sign information is sent to the parameter server for aggregation. We assume here that communication between the workers and the server is error-free.  Note that this assumption is frequently embraced within the context of wireline network configurations, as demonstrated in \cite{seide20141, bernstein2018bsignsgd, alistarh2017qsgd, wen2017terngrad, bernstein2018asignsgd, lee2020bayesian, park2021bayesian}. 


{\bf Majority voting aggregation and model update:} The parameter server applies the majority voting principle to aggregate the locally computed stochastic sign information. This aggregated sign is computed as
\begin{align} \label{eqn:MV_rule}
  {\hat U}_{n}^t = {\sf sign}\left[ \sum_{m=1}^M  Y_{m,n}^t\right] \in \{-1,+1\},
\end{align}
and these aggregated signs are sent to all workers. Then, each worker updates its model parameter as
\begin{align}
	x_n^{t+1} = x_n^t - \delta {\hat U}_{n}^t, \,\, \forall n \in [N],
\end{align}
where $\delta$ is a fixed learning rate parameter.

\subsection{An Upper Bound on SignSGD-MV}

It is instructive to consider some ideal cases to understand the limits of signSGD-MV convergence speed. The ideal cases to attain the highest convergence rate of signSGD-MV involves full-data knowledge and full-batch computation as listed below:

\begin{itemize}
\item {\bf Case 1 (Full-data knowledge):} Each edge device has perfect knowledge of the global dataset $\mathcal{D}=\bigcup_{m=1}^M\mathcal{D}_m$.
   \item {\bf Case 2 (Full-batch computation):} Each edge device is capable of computing the local gradient with full-batch size, i.e., $B_m = |\mathcal{D}_m|$.
\end{itemize}

Under Cases 1 and 2, each local device can compute the true gradient using the model parameter $\mathbf{x}^t$ at iteration $t \in [T]$ by using all data points in the dataset $\mathcal{D}$ without any limitation on the data knowledge and compute capability. As a result, under these ideal cases, it is sufficient to consider a single-edge device to understand the limits on the convergence rate. Let $ {\bf \bar g}^t=\left[{\bar g}_{1}^t,{\bar g}_{2}^t,\ldots, {\bar g}_{N}^t\right]$ be the true gradient at iteration $t$, which is defined as
\begin{align}
{\bf \bar g}^t = \frac{1}{|\mathcal{D}|} \sum_{\mathbf{d} \in \mathcal{D}} \nabla F \left( \mathbf{x}^t; \mathbf{d} \right) \in \mathbb{R}^N.
\end{align}
Then, the sign of the true $n$th gradient coordinate is denoted by
\begin{align}
U_n^t= {\sf sign}\left( {\bar g}_{n}^t\right) \in \{-1,+1\}. \label{eq:idealsign}
\end{align}
Unlike ${\hat U}_n^t$ in \eqref{eqn:MV_rule}, which is the estimated sign of the gradient using majority vote, $U_n^t$ is the true sign of the gradient, and it is deterministic because  $  \frac{1}{|\mathcal{D}|} \sum_{\mathbf{d} \in \mathcal{D}} \nabla F \left( \mathbf{x}^t; \mathbf{d} \right) $ produces a deterministic value for given $\mathbf{x}^t$ and  $\mathcal{D}$. Using this true sign information, the server updates the model as
\begin{align}
x_n^{t+1} = x_n^t - \delta { U}_{n}^t, \,\, \forall n \in [N],
\end{align}
where $\delta$ is a fixed learning rate parameter. This algorithm is often referred to as sign gradient descent (signGD) \cite{balles2018dissecting, ma2023understanding}, and its performance provides an upper-bound on the performance of signSGD-MV because it updates the model with correct gradient signs in every iteration.

\begin{figure}[t]
    \centering
    \includegraphics[width=1\columnwidth]{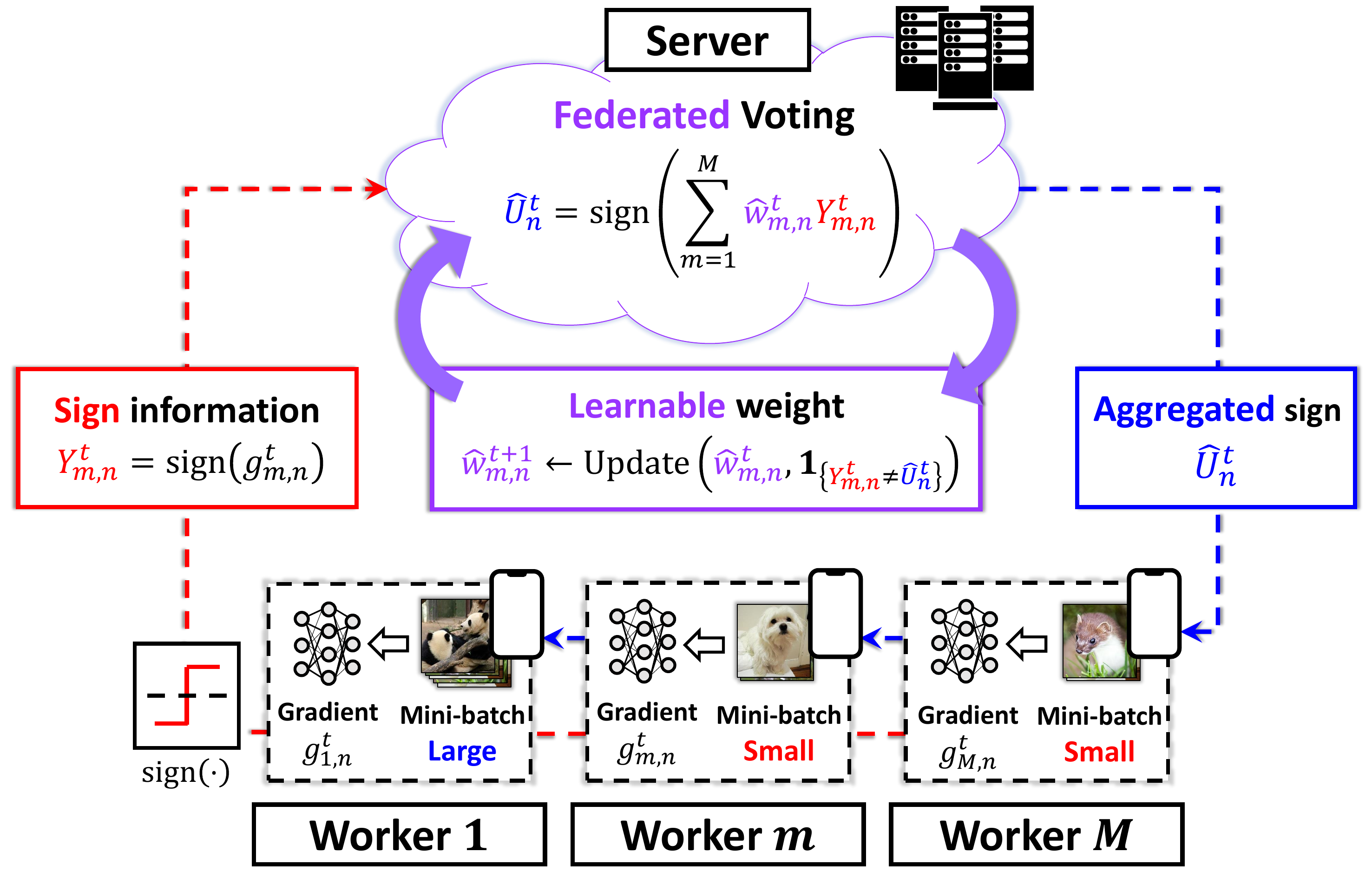}
    \caption{An illustration of signSGD-FV.}
    \label{fig:DL_system}
\end{figure}

\section{SignSGD-FV}
We begin by offering a novel perspective on the gradient sign aggregation challenge within the context of distributed signSGD, specifically addressing scenarios involving heterogeneous batch sizes. Our approach involves examining this problem through the lens of a decoding problem associated with a repetition code over parallel BSCs characterized by distinct cross-over probabilities. By leveraging this interpretation, we introduce an optimal solution for weighted majority voting aggregation in distributed signSGD. This solution assumes a scenario where the server possesses perfect knowledge of the cross-over probabilities. However, acquiring such perfect knowledge is challenging in real-world applications. In light of this challenge, we propose an alternative approach termed \textit{federated voting}, which involves weighted majority voting with learnable weights. The weights are updated per iteration by comparing the local gradient sign information that each device sent and the estimated gradient sign decoded by harnessing the previously updated weight. This adaptive mechanism addresses the limitation of relying on perfect knowledge and enhances the robustness of the aggregation process in distributed signSGD. Fig. \ref{fig:DL_system} illustrates the entire signSGD-FV algorithm.

\subsection{Algorithm}
{\bf Distributed repetition encoding:} 
Under full-data knowledge and full-batch computation, each worker can compute the true sign of the gradient $U_{n}^t$ for $n\in [N]$. We define this sign information per coordinate as a \textit{message bit} for the $n$th gradient coordinate at iteration $t$. We presume that this message bit is virtually encoded across the number of workers with repetition coding with the rate of $R=\frac{1}{M}$. The encoder output is defined as a codeword with length $M$, namely, 
\begin{align}
    {\bf U}_n^t=\left[ U_{n}^t, \cdots, U_{n}^t \right] \in \{-1,+1\}^M,
\end{align}
where all coded bits are identical with the true sign.

{\bf Communication channel model for stochastic gradient computation:}
From \eqref{eq:SG}, worker $m\in [M]$ calculates the stochastic gradient $g_{m,n}^t$ using a mini-batch size of $B_m$. It is important to note that the sign of $g_{m,n}^t$, represented as $Y_{m,n}^t={\sf sign}\left(g_{m,n}^t\right)$ in  \eqref{eq:signSG}, may differ from the true sign of the ideal gradient as defined in \eqref{eq:idealsign}, i.e., $U_{n}^t={\sf sign}\left({\bar g}_n^t\right)$. This sign discrepancy arises due to the limited batch size employed by each worker when computing the stochastic gradient. We model the process of stochastic gradient computation with limited data points and one-bit quantization as a noisy communication channel, specifically, a BSC. As illustrated in Fig. \ref{fig:BSC_model}, from the true gradient message $U_n^t$, each worker $m \in [M]$ computes a one-bit stochastic gradient message $Y_{m,n}^t$ for coordinate $n$ at iteration $t$ via the corresponding BSC with cross-over probability of $p_{m,n}^t$, and then sends it. Consequently, the server receives $M$ independent noisy binary gradient message bits, i.e., $Y_{1,n}^t, \cdots, Y_{M,n}^t$, assuming that all workers employ independent and identically distributed (IID) samples when computing the stochastic gradient. In this modeling method, the cross-over probability of the BSC, $ p_{m,n}^t$ is defined as
\begin{align}
    p_{m,n}^t = \mathbb{P} \left[ U_{n}^t \neq Y_{m,n}^t \right]. \label{eqn:def_cross}
\end{align}
This cross-over probability is a crucial parameter inversely proportional to the batch size $B_m$. For instance, when full-batch size is used for the gradient computation, the probability $p_{m,n}^t$ approaches zero, i.e., no sign flip error occurs. This observation underscores the critical role of error probability in developing optimal aggregation methods, especially when the workers employ heterogeneous batch sizes.

\begin{figure}[t]
    \centering
    \includegraphics[width=1\columnwidth]{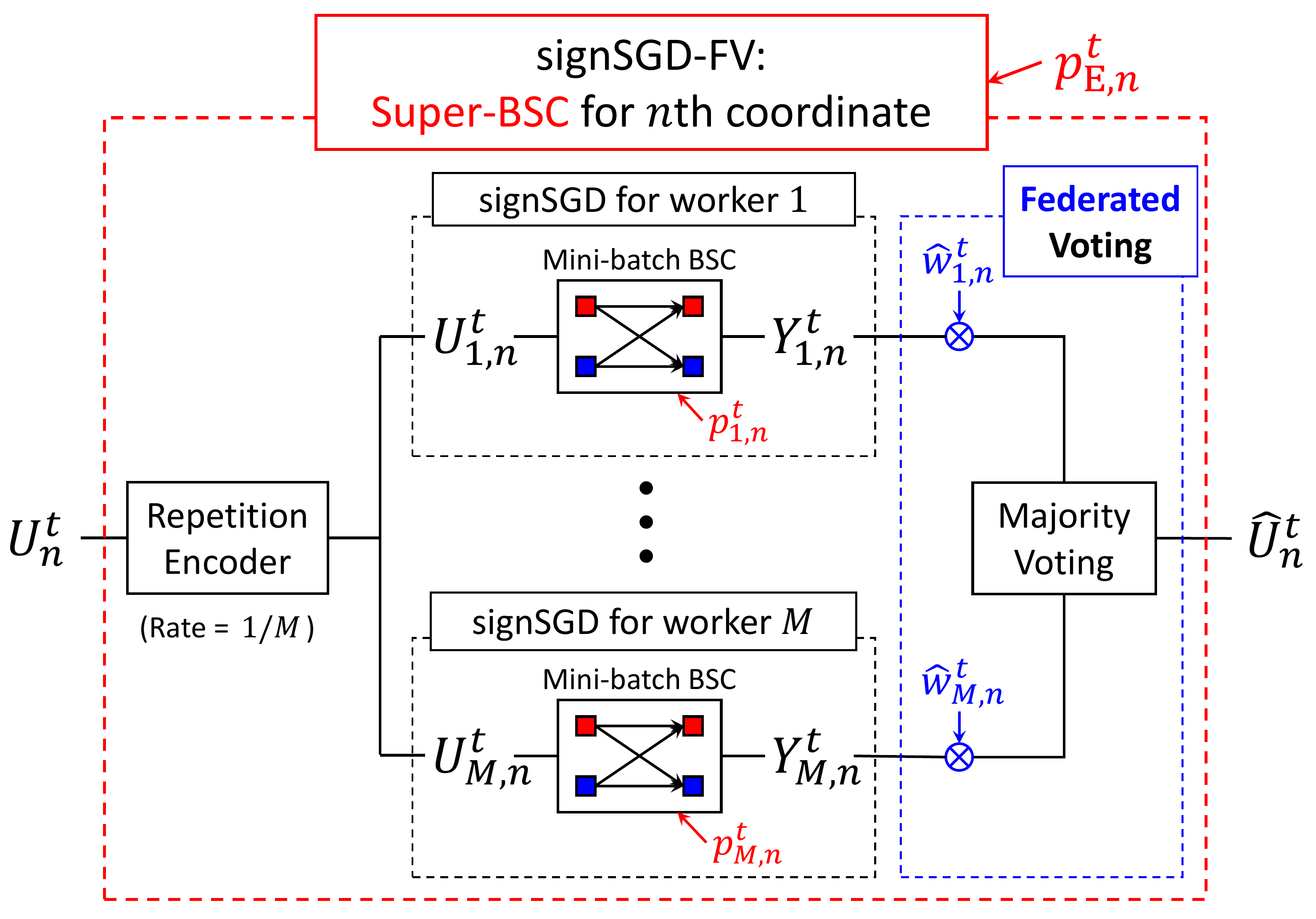}
    \caption{The coding-theoretic interpretation of signSGD-FV.}
    \label{fig:BSC_model}
\end{figure}

{\bf Optimal aggregation of the sign of gradients:}  
The main task of the parameter server is to optimally aggregate the $M$ stochastic gradients $Y_{1,n}^t, \cdots, Y_{M,n}^t$ to recover the one-bit gradient message $U_n^t$ for $n \in [N]$. From our communication model, it becomes evident that determining the most effective aggregation method is equivalent to identifying the optimal decoder for a repetition code with a rate of $\frac{1}{M}$ under parallel BSCs, each with cross-over probabilities $p_{1,n}^t, \cdots, p_{M,n}^t$.

This interpretation from coding theory provides valuable insight into the development of an optimal aggregation method, seen through the lens of the classical maximum likelihood (ML) decoder. To design the ML decoder, we first define the log-likelihood function when $U_n^t = +1$ as
\begin{align}
    &\ln \mathbb{P} \left[ \left. Y_{1,n}^t, \cdots, Y_{M,n}^t \right| U_{n}^t = +1 \right] \nonumber \\
    &= \! \sum_{m=1}^M \! \ln \left( 1 \!-\! p_{m,n}^t \right) {\bf 1}_{\left\{ Y_{m,n}^t \! = +1 \right\}} \!+\! \ln  p_{m,n}^t  {\bf 1}_{\left\{ Y_{m,n}^t \! = -1 \right\}}. \label{eqn:llr_plus}
\end{align}
Similarly, the log-likelihood function when $U_n^t = -1$ is given by
\begin{align}
    &\ln \mathbb{P} \left[ \left. Y_{1,n}^t, \cdots, Y_{M,n}^t \right| U_{n}^t = -1 \right] \nonumber \\
    &= \sum_{m=1}^M \! \ln \left( 1 \!-\! p_{m,n}^t \right) {\bf 1}_{\left\{ Y_{m,n}^t \! = -1 \right\}} \!+\! \ln p_{m,n}^t {\bf 1}_{\left\{ Y_{m,n}^t \! = +1 \right\}}. \label{eqn:llr_minus}
\end{align}
Using \eqref{eqn:llr_plus} and \eqref{eqn:llr_minus}, the LLR is given by
\begin{align}
    &\ln \frac{ \mathbb{P} \left[ \left. Y_{1,n}^t, \cdots, Y_{M,n}^t \right| U_{n}^t = +1 \right]}{ \mathbb{P} \left[ \left. Y_{1,n}^t, \cdots, Y_{M,n}^t \right| U_{n}^t = -1 \right]} 
  = \sum_{m=1}^M \ln \frac{1 - p_{m,n}^t}{p_{n,m}^t} Y_{m,n}^t.
\end{align}
Since the ML decoding rule selects the case where the likelihood value is larger between $U_n^t = +1$ and $U_n^t = -1$ cases, it is optimal to adopt the sign of LLR. Therefore, the optimal aggregation method for arbitrary batch-sizes across workers is
\begin{align}
    {\hat U}_{{\sf WMV}, n}^t = {\sf sign} \left( \sum_{m=1}^M w_{m,n}^t Y_{m,n}^t \right), \label{eqn:WMV_agg}
\end{align}
where $w_{m,n}^t = \ln \frac{1 - p_{m,n}^t}{p_{m,n}^t}$. We refer to this aggregation as a \textit{weighted majority vote (WMV)} decoder. The decoding error probability is defined as
\begin{align}
    P_{\mathsf{E}, n}^t=\mathbb{P} \left[ U_n^t \ne {\hat U}_{{\sf WMV}, n}^t \right].
\end{align}
For simplicity, we omit the notation $n$ and $t$ from $P_{\mathsf{E}, n}^t$ in the sequel. This decoding error probability can be interpreted as a cross-over probability for the $n$th coordinate super-channel as depicted in Fig. \ref{fig:BSC_model}. This will serve as a pivotal parameter for evaluating the convergence rate in subsequent analyses. Following the execution of optimal gradient aggregation, the parameter server proceeds to update the model as follows:
\begin{align} \label{eqn:update_model}
    x_n^{t+1} = x_n^t - \delta {\hat U}_{{\sf WMV}, n}^t, \,\, \forall n \in [N].
\end{align} 
The optimality of WMV decoding is described in our convergence analysis of Sec. \ref{sec:converge}, which shows that the convergence rate of signSGD-style algorithms can be accelerated by reducing the sign decoding error probability.

  
\begin{algorithm}[t]
\caption{signSGD-FV}
\label{alg:signSGD-FV}
\begin{algorithmic}
    \STATE \textbf{Input:} Initial model $\mathbf{x}^1$, learning rate $\delta$, the number of workers $M$, worker $m$'s mini-batch size $B_m$, total iteration $T$, initial weights $\hat{w}_{m,n}^1 \! = \! 1$, initial phase duration $T_\mathsf{in}$
    
    \FOR{$t=1:T$}
        \vspace{0.5em}
        \STATE {\bf *** Each worker ***}
        \FOR{$m=1: M$}
            \STATE {\bf Compute} $\mathbf{g}_m^t$ with batch size $B_m$
            \STATE {\bf Encode} $Y_{m,n}^t = \mathsf{sign} \left( g_{m,n}^t \right), \, \forall n \in [N]$
            \STATE {\bf Send} $Y_{m,1}^t, \cdots, Y_{m,N}^t$ to {\bf server}
        \ENDFOR

        \vspace{0.5em}
        \STATE {\bf *** Server ***}
        \FOR{$n=1: N$}
            \STATE {\bf Decode} \\
            \vspace{0.5em}
            $\hat{U}_n^t = 
            \begin{cases}
            \mathsf{sign} \left( \sum_{m=1}^M \hat{w}_{m,n}^t Y_{m,n}^t \right) , & \text{if } t > T_\mathsf{in} \\
            \mathsf{sign} \left( \sum_{m=1}^M Y_{m,n}^t \right), & \text{o.w.}
            \end{cases}$
            \vspace{0.1em}
            \FOR{$m = 1: M$}
                \STATE {\bf Estimate }$\hat{p}_{m,n}^{t+1} = \frac{\sum_{i=1}^t \mathbf{1}_{\left\{ Y_{m,n}^t \ne \hat{U}_n^t \right\}}}{t}$
                \STATE {\bf Update} $\hat{w}_{m,n}^{t+1} = \ln \frac{1 - \hat{p}_{m,n}^{t+1}}{\hat{p}_{m,n}^{t+1}}$
            \ENDFOR
            \STATE {\bf Send} $\hat{U}_n^t$ to {\bf all workers} $m \in [M]$
        \ENDFOR

        \vspace{0.5em}
        \STATE {\bf *** Each worker ***}
        \FOR{$m=1: M$}
            \STATE {\bf Update} $x_n^{t+1} = x_n^t - \delta \hat{U}_n^t, \, \forall n \in [N]$
        \ENDFOR
        \vspace{0.5em}
    \ENDFOR
\end{algorithmic}
\end{algorithm}

\subsection{Learning cross-over probabilities} \label{sec:cross-over}
When performing the optimal aggregation, the server needs to know the cross-over probabilities of the BSCs for every coordinates, i.e., $p_{1,n}^t, \cdots, p_{M,n}^t$ for $n \in [N]$. Practically, attaining perfect knowledge of these probabilities is infeasible. To resolve this problem, we introduce an approach for estimating the cross-over probabilities by leveraging a joint channel estimation and data detection method, originally introduced in the context of wireless communications \cite{kim2019supervised}.

The proposed approach is to empirically estimate the cross-over probabilities by comparing the signs of the received bits and the decoded bits. In the initial phase of our iterative process, $t \leq T_{\sf in}$, the server encounters a challenge in executing optimal weighted majority voting due to the lack of knowledge about channel cross-over probabilities. In this phase, the server decodes the message bit through simple majority vote decoding \cite{bernstein2018asignsgd} as
\begin{align}
   {\hat U}_{{\sf MV}, n}^t = {\sf sign} \left( \sum_{m=1}^M   Y_{m,n}^t \right).
\end{align}
Then, the server computes the empirical average of the bit differences between the received gradient bit $Y_{m,n}^t$ and the decoded bit ${\hat U}_{{\sf MV}, n}^t$ for each coordinate $n \in [N]$. This empirical average provides the estimate of the cross-over probability as 
\begin{align}
    {\hat p}_{m,n}^{T_{\sf in}} = 
    \frac{\sum_{i=1}^{T_{\sf in}}{\bf 1}_{ \left\{ Y_{m,n}^i \neq  {\hat U}_{{\sf MV}, n}^i \right\}} }{T_{\sf in}}.
\end{align} 
After the initial phase $t > T_{\sf in}$, the server begins using the estimated cross-over probabilities established during the initial phase. These probabilities are applied in the process of decoding the one-bit gradient message, employing the weighted majority vote decoding rule denoted as ${\hat U}_{{\sf WMV}, n}^t = {\sf sign} \left( \sum_{m=1}^M \ln \frac{1- {\hat p}_{m,n}^{t}}{ {\hat p}_{m,n}^{t}} Y_{m,n}^t \right)$. Subsequently, the server proceeds to update these cross-over probabilities in a online manner as 
\begin{align} \label{eqn:prob_est}
      {\hat p}_{m,n}^{t+1} = \frac{T_{\sf in}}{t} {\hat p}_{m,n}^{T_{\sf in}} +  \frac{t-T_{\sf in}}{t}
    \frac{\sum_{i=T_{\sf in}+1}^{t} {\bf 1}_{ \left\{  Y_{m,n}^i \neq {\hat U}_{{\sf WMV} ,n}^i \right\}} }{t-T_{\sf in}}.
\end{align}
Using the estimated cross-over probabilities, it also updates the weights for the next iteration decoding as
\begin{align}
    {\hat w}_{m,n}^{t+1} =\ln \frac{1 - {\hat p}_{m,n}^{t+1}}{{\hat p}_{m,n}^{t+1}}. \label{eqn:est_weight}
\end{align}
The entire procedure is summarized in Algorithm \ref{alg:signSGD-FV}.

\section{Unified Convergence Rate Analysis} \label{sec:converge}
In this section, we provide a unified convergence rate analysis for signSGD-style algorithms with an arbitrary binary aggregation method. Leveraging this unified convergence rate analysis framework, we provide the convergence rate of signSGD algorithms when using a weighted majority voting decoding method by capitalizing on the effect of heterogeneous batch sizes across workers.

\subsection{Assumptions}


We first introduce some assumptions that serve as a preparatory step toward the presentation of the convergence rate analysis in the following subsection.

\begin{assumption}[Lower bound] \label{ass:1}
    For all $\mathbf{x} \in \mathbb{R}^N$ and a global minimum point $\mathbf{x}^\star$, we have an objective value as
    \begin{align}
        f (\mathbf{x}) \ge f \left( \mathbf{x}^\star \right) = f^\star.
    \end{align}
\end{assumption}

Assumption \ref{ass:1} is necessary for the convergence to local minima. 

\begin{assumption}[Coordinate-wise smoothness] \label{ass:2}
    For all $\mathbf{x}, \mathbf{y} \in \mathbb{R}^N$, there exists a vector with non-negative constants $\mathbf{L} = \left[ L_1, \cdots, L_N \right]$ that satisfies 
    \begin{align}
        \left| f(\mathbf{y}) \! - \! f(\mathbf{x}) \! - \! \left\langle \nabla f(\mathbf{x}), \mathbf{y} - \mathbf{x} \right\rangle \right| \le \sum_{n=1}^N \frac{L_n}{2} \left( y_n - x_n \right)^2.
    \end{align}
\end{assumption}
Assumption \ref{ass:2} indicates that the objective function satisfies a coordinate-wise Lipschitz condition. Assumptions \ref{ass:1} and \ref{ass:2} are commonly used for the convergence analysis of learning algorithms.

\begin{assumption}[Unbiased stochastic gradient with a finite variance]
\label{ass:3}
    The stochastic gradient of worker $m \in [M]$, $\mathbf{g}_m^t$, is unbiased and each coordinate $n \in [N]$ of $\mathbf{g}_m$ has a normalized variance bound with a constant $\sigma \in \mathbb{R}^+$, i.e.,
    \begin{align}
        \mathbb{E} \left[ \mathbf{g}_m \right] = \bar{\mathbf{g}}, \,\,\, \mathbb{E} \left[ \frac{\left( g_{m,n} - \bar{g}_n \right)^2}{\left| \bar{g}_n \right|^2} \right] \le \sigma^2.
    \end{align}
\end{assumption}
Assumption \ref{ass:3} ensures that the stochastic gradients are unbiased across workers, and the normalized variance of all components for the stochastic gradient are bounded by a constant. This normalization in Assumption \ref{ass:3} helps to derive an upper-bound on cross-over probability. The following analysis is based on these three assumptions.

\subsection{Convergence Rate Analysis}

We provide a unified framework for analyzing the convergence rate of signSGD using an arbitrary binary aggregation method. The following theorem is the main result of this section.

\begin{theorem}[Universal convergence rate] \label{thm:1}
    Let $A_n^t:\{-1,+1\}^M\rightarrow \{-1,+1\}$ be a binary sign aggregation function applied to the $n$th gradient component at iteration $t$. This binary sign aggregation function produces an estimate of the true gradient sign $U_n^t$, i.e., 
    \begin{align}
        {\hat U}_n^t = A_n^t \left( Y_{1,n}^t, \cdots, Y_{M,n}^t \right) \in \{-1,+1\}.
    \end{align}
    Using the estimated gradient sign $\hat{U}_n^t$, the maximum of the sign decoding error probability over all coordinates and iterations is denoted by
    \begin{align}
        P_{\sf E}^{\sf max}  = \underset{n \in [N], \, t \in [T]}{\max} \,\, \mathbb{P} \left[ U_n^t \ne {\hat U}_n^t \right].
    \end{align}
    Then, with a fixed learning parameter $\delta = \sqrt{\frac{2 \left( f^1 - f^\star \right)}{T \lVert \mathbf{L} \rVert_1}}$, the convergence rate of signSGD with the binary aggregation function $A_n^t (\cdot)$ is bounded as
    \begin{align}
        \mathbb{E} \left[ \frac{1}{T} \sum_{t=1}^{T} \lVert \bar{\mathbf{g}}^t \rVert_1 \right] \le \frac{1}{1 - 2 P_{\sf E}^{\sf max} } \sqrt{\frac{2 \left( f^1 - f^\star \right) \lVert \mathbf{L} \rVert_1}{T}}, 
    \end{align}
    for $P_{\sf E}^{\sf max}<\frac{1}{2}$.
\end{theorem}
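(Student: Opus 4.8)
The plan is to run the standard descent-lemma argument for signSGD, with the sign decoding error probability entering as the only nonstandard ingredient. First I would invoke the coordinate-wise smoothness of Assumption~\ref{ass:2} with $\mathbf{y}=\mathbf{x}^{t+1}$ and $\mathbf{x}=\mathbf{x}^t$. Since the model update \eqref{eqn:update_model} gives $x_n^{t+1}-x_n^t = -\delta \hat{U}_n^t$, the true gradient satisfies $\bar{\mathbf{g}}^t = \nabla f(\mathbf{x}^t)$, and $(\hat{U}_n^t)^2 = 1$, this produces a per-iteration descent inequality
\begin{align}
f(\mathbf{x}^{t+1}) - f(\mathbf{x}^t) \le -\delta \sum_{n=1}^N \bar{g}_n^t \hat{U}_n^t + \frac{\delta^2}{2}\lVert \mathbf{L}\rVert_1,
\end{align}
in which the curvature term has already collapsed to a clean constant precisely because the sign update has unit magnitude per coordinate.

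The crux is to convert the drift term $\sum_n \bar{g}_n^t \hat{U}_n^t$ into something governed by the decoding error probability. The key observation is that $\bar{g}_n^t \hat{U}_n^t = |\bar{g}_n^t|\, U_n^t \hat{U}_n^t$, since $U_n^t = {\sf sign}(\bar{g}_n^t)$ by \eqref{eq:idealsign}. Conditioning on the history $\mathcal{F}_t$ that determines $\mathbf{x}^t$ (hence $\bar{g}_n^t$ and $U_n^t$), the sign product $U_n^t \hat{U}_n^t$ equals $+1$ when the aggregation decodes correctly and $-1$ otherwise, so its conditional expectation is exactly $1 - 2\,\mathbb{P}[U_n^t \ne \hat{U}_n^t \mid \mathcal{F}_t]$. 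Writing $P_{{\sf E},n}^t$ for this per-coordinate error probability and taking expectations gives
\begin{align}
\mathbb{E}\!\left[ \bar{g}_n^t \hat{U}_n^t \right] = \mathbb{E}\!\left[ |\bar{g}_n^t| \left( 1 - 2 P_{{\sf E},n}^t \right) \right] \ge \left( 1 - 2 P_{\sf E}^{\sf max} \right) \mathbb{E}\!\left[ |\bar{g}_n^t| \right],
\end{align}
where the inequality uses $P_{{\sf E},n}^t \le P_{\sf E}^{\sf max}$ together with $P_{\sf E}^{\sf max} < \tfrac12$, so that the multiplier $1 - 2P_{\sf E}^{\sf max}$ is strictly positive and the inequality direction is preserved.

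Substituting this lower bound, summing the drift inequality over $t=1,\ldots,T$ telescopes the left-hand side to $\mathbb{E}[f(\mathbf{x}^{T+1})] - f^1$, which I would lower-bound by $f^\star - f^1$ via Assumption~\ref{ass:1}. Rearranging yields $(1-2P_{\sf E}^{\sf max})\,\delta \sum_t \mathbb{E}[\lVert \bar{\mathbf{g}}^t\rVert_1] \le (f^1 - f^\star) + \tfrac{\delta^2 T}{2}\lVert\mathbf{L}\rVert_1$. Dividing by $T$ and by $(1-2P_{\sf E}^{\sf max})\delta$, and finally inserting the stated step size $\delta = \sqrt{2(f^1-f^\star)/(T\lVert\mathbf{L}\rVert_1)}$, balances the two residual terms $\tfrac{f^1-f^\star}{\delta T}$ and $\tfrac{\delta}{2}\lVert\mathbf{L}\rVert_1$, each of which collapses to $\sqrt{(f^1-f^\star)\lVert\mathbf{L}\rVert_1/(2T)}$, whose sum is the claimed $\sqrt{2(f^1-f^\star)\lVert\mathbf{L}\rVert_1/T}$.

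I expect the main obstacle to be the conditioning step in the second paragraph: one must be careful that $\bar{g}_n^t$ and hence $U_n^t$ are measurable with respect to the history $\mathcal{F}_t$, while $\hat{U}_n^t$ carries the fresh stochastic-gradient randomness of iteration $t$, so that the identity $\mathbb{E}[U_n^t\hat{U}_n^t\mid\mathcal{F}_t] = 1 - 2P_{{\sf E},n}^t$ is legitimate and the uniform bound $P_{{\sf E},n}^t \le P_{\sf E}^{\sf max}$ can be pulled through the expectation. Everything else is routine bookkeeping, and the strict positivity enforced by $P_{\sf E}^{\sf max}<\tfrac12$ is exactly what makes the final division and the stated bound meaningful.
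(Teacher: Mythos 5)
Your proposal is correct and follows essentially the same route as the paper's own proof: the coordinate-wise descent lemma, rewriting $\bar{g}_n^t \hat{U}_n^t$ via the indicator of a sign-decoding error, bounding the resulting error term by $P_{\sf E}^{\sf max}$ after conditioning on $\mathbf{x}^t$, telescoping, and substituting the stated step size. The only cosmetic difference is that you phrase the key identity as $U_n^t\hat{U}_n^t = 1 - 2\,\mathbf{1}_{\{U_n^t \ne \hat{U}_n^t\}}$ while the paper writes the equivalent decomposition $-\bar{g}_n^t\hat{U}_n^t = -|\bar{g}_n^t| + 2|\bar{g}_n^t|\mathbf{1}_{\{U_n^t \ne \hat{U}_n^t\}}$ directly.
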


\begin{proof}
    See Appendix \ref{sec:proof_thm1}.
\end{proof}

The convergence rate established in Theorem \ref{thm:1} is universally applicable across various binary decoding strategies, as it hinges on the determination of the maximum decoding error probability $P_{\sf E}^{\sf max}$. This universality stems from the fact that the rate is contingent upon the highest decoding error probability, offering a broad generalization. Consequently, our findings extend the scope of existing convergence rate analyses that utilize majority voting aggregation, as presented in \cite{bernstein2018asignsgd, safaryan2021stochastic}. For instance, Theorem \ref{thm:1} demonstrates that signSGD with binary aggregation has a guaranteed converge rate of $\mathcal{O} \left( \frac{1}{\sqrt{T}} \right)$, which is an identical rate with a standard distributed SGD algorithm \cite{zinkevich2010parallelized, meng2019convergence}.

The fundamental insight derived from Theorem \ref{thm:1} revolves around the potential for achieving a more rapid convergence rate. First of all, it is notable that the convergence rate of signGD is limited to $\mathcal{O} \left( \frac{1}{\sqrt{T}} \right)$, which is the same rate as other signSGD-based algorithms, even though signGD can completely eliminate the sign decoding errors, i.e., $P_\mathsf{E}^\mathsf{max} = 0$. Nevertheless, sign decoding errors can occur in signSGD due to the randomness of mini-batches, making the rate lower. Hence, the convergence rate acceleration of signSGD-based algorithms is contingent upon reducing the maximum decoding error probability. Leveraging this pivotal observation, our attention will be directed toward establishing an upper bound for the decoding error probability associated with applying the WMV decoding method. This upper bound will delineate the convergence rates of signSGD algorithms, tailoring them to the decoding methods and providing insight into the reasons behind the enhanced convergence rate facilitated by WMV decoding, especially in heterogeneous batch sizes.

\subsection{Decoding Error Probability Bounds for WMV Aggregation} \label{sec:error_bound}

We commence by providing a lemma that is essential to derive the upper bounds on the decoding error probabilities.

\begin{lemma}[Large deviation bound] \label{lem:1}
    For all $p \in [0,1]$ and $|t|<\infty$, 
    \begin{align}
        (1-p)e^{-tp} + pe^{t(1-p)} \leq \exp \left( \frac{1 - 2p}{4 \ln \frac{1-p}{p}} t^2 \right).
    \end{align}
\end{lemma}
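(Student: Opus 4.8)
The plan is to recognize the left-hand side as the moment generating function of a centered random variable and to prove the equivalent Kearns--Saul form. Writing $X$ for the variable taking value $1-p$ with probability $p$ and $-p$ with probability $1-p$, one has $\mathbb{E}[X]=0$ and $\mathbb{E}[e^{tX}]=(1-p)e^{-tp}+pe^{t(1-p)}$, which is exactly the claimed left-hand side. Factoring out $e^{-tp}$, the inequality is equivalent to
\begin{align}
\ln\big[(1-p)+pe^{t}\big]\le pt+\frac{1-2p}{4L}\,t^{2},\qquad L:=\ln\frac{1-p}{p}. \nonumber
\end{align}
I would first dispose of the degenerate cases: $p\in\{0,1\}$ gives equality $1\le 1$, while $p=\tfrac12$ reduces (in the limit $L\to0$, so that the constant tends to $\tfrac18$) to $\cosh(t/2)\le e^{t^{2}/8}$, which follows term-by-term from $(2n)!\ge 2^{n}n!$. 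Since the left-hand side and the constant are both invariant under $(p,t)\mapsto(1-p,-t)$, I may assume $0<p<\tfrac12$, so that $L>0$ and $c:=\tfrac{1-2p}{4L}>0$.

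Next I would set $\Phi(t)=pt+ct^{2}-\ln[(1-p)+pe^{t}]$ and aim to show $\Phi\ge0$. A direct computation gives $\Phi(0)=0$, $\Phi'(0)=0$, and $\Phi''(t)=2c-\sigma^{2}(t)$, where $\sigma^{2}(t)=\psi(t)\big(1-\psi(t)\big)$ is the variance of the exponentially tilted law, with $\psi(t)=\tfrac{pe^{t}}{(1-p)+pe^{t}}$ an increasing logistic running from $0$ to $1$ and $\sigma^{2}$ peaking at $\tfrac14$ when $\psi=\tfrac12$ (i.e.\ at $t=L$). Because $\sigma^{2}(0)=p(1-p)$, the sign of $\Phi''(0)$ is controlled by the auxiliary inequality $2p(1-p)L\le 1-2p$, equivalently $\tfrac{2p(1-p)}{1-2p}\ln\tfrac{1-p}{p}\le1$, which I would verify by elementary calculus (it is tight as $p\to\tfrac12$). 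This places $t=0$ safely in a region where $\Phi$ is locally convex.

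For $t\le0$ the argument is short: $\psi(t)\le\psi(0)=p<\tfrac12$ forces $\sigma^{2}(t)\le\sigma^{2}(0)=p(1-p)\le 2c$, so $\Phi''\ge0$ on $(-\infty,0]$, and with $\Phi(0)=\Phi'(0)=0$ this yields $\Phi\ge0$ there. The substance lies in $t>0$, where $\sigma^{2}$ exceeds $2c$ on a bounded band around $t=L$, making $\Phi$ convex--concave--convex. Using $\Phi'(t)=\int_{0}^{t}\big(2c-\sigma^{2}(u)\big)\,du$ together with $\Phi''\to 2c>0$ and $\Phi(t)\to+\infty$ as $t\to\infty$, the global minimum of $\Phi$ on $[0,\infty)$ is attained either at $0$ or at the interior local minimizer $t^{\star}$, the largest root of $\Phi'=0$, characterized by $\psi(t^{\star})=p+2ct^{\star}$.

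The crux --- and the step I expect to be the main obstacle --- is showing $\Phi(t^{\star})\ge0$ at this stationary point. The natural move is to use the stationarity relation to eliminate the logarithm, rewriting $\Phi(t^{\star})=\tfrac12(\psi^{\star}+p)\,t^{\star}-\ln(1-p)+\ln(1-\psi^{\star})$ with $\psi^{\star}=p+2ct^{\star}$ and $t^{\star}=\ln\tfrac{(1-p)\psi^{\star}}{p(1-\psi^{\star})}$. The difficulty is that this stationarity equation is transcendental and admits no closed-form solution, so one cannot simply substitute $t^{\star}$; the exact value $c=\tfrac{1-2p}{4L}$ is precisely what is needed to force nonnegativity here, since the bound is essentially tight (the minimum of $\Phi$ is nearly zero). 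I would close the argument by treating $\Phi(t^{\star})$ as a single-variable function along the stationarity constraint and pinning down its sign by a monotonicity argument, again leaning on the auxiliary estimate $2p(1-p)L\le 1-2p$ as the key elementary ingredient, rather than attempting an explicit evaluation at $t^{\star}$.
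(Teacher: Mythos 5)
The paper does not prove this lemma at all; it simply cites Kearns and Saul, so there is no in-paper argument to match. Judged on its own terms, your setup is sound: the identification of the left-hand side as the moment generating function of the centered Bernoulli variable, the reduction to $\Phi(t)=pt+ct^{2}-\ln[(1-p)+pe^{t}]\ge 0$ with $c=\tfrac{1-2p}{4L}$, the symmetry reduction to $0<p<\tfrac12$, the limiting cases $p\in\{0,\tfrac12,1\}$, and the disposal of $t\le 0$ via $\sigma^{2}(t)\le p(1-p)\le 2c$ are all correct, and the auxiliary inequality $2p(1-p)\ln\tfrac{1-p}{p}\le 1-2p$ that you invoke is true and is indeed the right elementary ingredient for the convexity of $\Phi$ near the origin.

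However, the proof is not complete where it matters most, and you say so yourself. The entire difficulty of the Kearns--Saul inequality is concentrated in showing $\Phi(t^{\star})\ge 0$ at the interior local minimizer on $(0,\infty)$, and your proposal replaces this step with an unexecuted plan (``pinning down its sign by a monotonicity argument''). It is not specified what quantity is monotone in what variable, nor why the auxiliary estimate $2p(1-p)L\le 1-2p$ --- which controls $\Phi''(0)$, not $\Phi(t^{\star})$ --- would close the gap; on its own it does not, since the minimum of $\Phi$ is attained far from $t=0$ (near $t=L$) where that estimate gives no leverage. Known complete proofs (Kearns--Saul's original appendix, or the later arguments of Berend--Kontorovich and Raginsky--Sason) all require a genuinely nontrivial additional idea at exactly this point, e.g.\ a change of variables that parametrizes the stationary points, or a two-variable monotonicity analysis in $(p,\psi^{\star})$. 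As written, the argument establishes the inequality only for $t\le 0$ and in a neighborhood of $t=0$; the case $t>0$ away from the origin, which is the substance of the lemma, remains open. Given that the paper handles this by citation, the cleanest fix is to do the same, or else to carry out the stationary-point analysis explicitly.
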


\begin{proof}
    See \cite{kearns2013large}.
\end{proof}

This lemma provides insight time into the features affecting weighted sums of binary random variables. It will be useful in the proof of the following theorem.

\begin{theorem}[Decoding error bound of WMV aggregation] \label{thm:2}
Suppose the server performs WMV decoding with perfect knowledge of $p_{m,n}^t$, i.e., 
\begin{align}
    {\hat U}_{{\sf WMV}, n}^t 
    &= {\sf sign} \left( \sum_{m=1}^M \ln \frac{1 - p_{m,n}^t}{p_{m,n}^t} Y_{m,n}^t \right).
\end{align} 
Then, the decoding error probability is upper bounded by
\begin{align} \label{eqn:WMV_errorbound}
    P_\mathsf{E}^\mathsf{WMV} \le \exp\left(-  M \gamma^{\sf WMV}\right),
\end{align}
where $\gamma^{\sf WMV}=\frac{1}{2M}\sum_{m=1}^M \ln \frac{1 - p_{m,n}^t}{p_{m,n}^t} \left( \frac{1}{2} - p_{m,n}^t \right)>0$ is the error exponent.
\end{theorem}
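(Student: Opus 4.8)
The plan is to fix an arbitrary coordinate $n$ and iteration $t$, drop the $(n,t)$ indices as the statement does, and bound the single-coordinate WMV error probability with a Chernoff bound in which Lemma~\ref{lem:1} supplies the per-worker moment-generating-function estimate. Because the channel is modeled as a symmetric BSC, the error probability is independent of the transmitted bit, so I would assume without loss of generality that the true sign is $U=+1$. Then $Y_m=+1$ with probability $1-p_m$ and $Y_m=-1$ with probability $p_m$, independently across $m$, and writing $w_m=\ln\frac{1-p_m}{p_m}$ a decoding error occurs only if $S:=\sum_{m=1}^M w_m Y_m \le 0$; I keep the inequality non-strict so as to absorb whatever tie-breaking convention $\mathsf{sign}(\cdot)$ uses. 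Note that this step needs only the BSC model, not Assumption~\ref{ass:3}, since Theorem~\ref{thm:2} is purely a statement about the given $p_m$.

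Next I would apply the Chernoff bound: for any $\lambda>0$, using independence across workers, $P_\mathsf{E}^\mathsf{WMV}\le \mathbb{P}[S\le 0]\le \mathbb{E}[e^{-\lambda S}]=\prod_{m=1}^M\big((1-p_m)e^{-\lambda w_m}+p_m e^{\lambda w_m}\big)$. Each factor is the moment-generating function of $-\lambda w_m Y_m$, so the task reduces to bounding this factor by an exponential-quadratic in $\lambda$ and then taking the product.

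The key step, which I expect to be the main obstacle, is matching each factor to the exact left-hand side of Lemma~\ref{lem:1}, namely $(1-p)e^{-sp}+pe^{s(1-p)}$. The difficulty is that the natural factor carries the \emph{symmetric} exponents $\pm\lambda w_m$, whereas Lemma~\ref{lem:1} carries the \emph{asymmetric} exponents $-sp$ and $s(1-p)$, so the two cannot be matched by a mere choice of $s$. I would bridge this with an exponential shift: setting $s=2\lambda w_m$ and factoring out $e^{-\lambda w_m(1-2p_m)}$, one verifies the identity $(1-p_m)e^{-\lambda w_m}+p_m e^{\lambda w_m}=e^{-\lambda w_m(1-2p_m)}\big((1-p_m)e^{-s p_m}+p_m e^{s(1-p_m)}\big)$. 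Applying Lemma~\ref{lem:1} to the bracketed term with this $s$, and using $\frac{1-2p_m}{4\ln\frac{1-p_m}{p_m}}s^2=(1-2p_m)\lambda^2 w_m$, yields the clean per-worker bound $(1-p_m)e^{-\lambda w_m}+p_m e^{\lambda w_m}\le \exp\big((1-2p_m)w_m(\lambda^2-\lambda)\big)$.

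Finally I would multiply over $m$ to get $P_\mathsf{E}^\mathsf{WMV}\le \exp\big((\lambda^2-\lambda)\sum_{m=1}^M(1-2p_m)w_m\big)$ and minimize the scalar $\lambda^2-\lambda$ over $\lambda>0$, attained at $\lambda=\tfrac12$ with value $-\tfrac14$. Since $\sum_{m=1}^M w_m(1-2p_m)=4M\gamma^\mathsf{WMV}$, plugging in $\lambda=\tfrac12$ gives $P_\mathsf{E}^\mathsf{WMV}\le \exp\big(-\tfrac14\sum_{m}w_m(1-2p_m)\big)=\exp(-M\gamma^\mathsf{WMV})$, as claimed. Positivity of the exponent is then automatic: $w_m=\ln\frac{1-p_m}{p_m}$ and $\tfrac12-p_m$ always share the same sign, so every summand $w_m(\tfrac12-p_m)$ is non-negative, and strictly positive whenever some $p_m\ne\tfrac12$.
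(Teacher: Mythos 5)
Your proof is correct and follows essentially the same route as the paper's: a Chernoff bound on the weighted sum, applied worker-by-worker via independence, with Lemma~\ref{lem:1} supplying the per-factor bound and the quadratic-in-the-free-parameter exponent minimized in closed form (your $\lambda=\tfrac12$ on the $\pm1$-valued $Y_m$ corresponds exactly to the paper's $s=1$ on the centered indicators $\bar Z_{m,n}^t=\mathbf{1}_{\{U_n^t\ne Y_{m,n}^t\}}-p_{m,n}^t$, under the affine substitution $Z_m=(1-Y_m)/2$). The only difference is that the paper's choice of centered indicator variables makes each factor match the left-hand side of Lemma~\ref{lem:1} directly, whereas your symmetric parametrization needs the small exponential-shift identity you supply; both are valid and yield the identical bound.
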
 

\begin{proof}
    See Appendix \ref{sec:proof_thm2}.
\end{proof}


Theorem \ref{thm:2} shows that the decoding error probability decreases exponentially with the number of workers. This result is aligned with the fact that the decoding error of a repetition code decreases exponentially with the code blocklength. As a result, increasing the number of workers has an impact analogous to increasing increasing the blocklength from a coding theoretical viewpoint. The error exponent $\gamma^{\sf WMV}$ determines how quickly the error probability decreases as the blocklength increases.

\begin{corollary} \label{cor:1}
    Suppose the computing error probability parameters $p_{m,n}^t$ are independent and distributed uniformly in $[0,a]$ with $a\leq \frac{1}{2}$. Then, the error exponent converges as follows:
    \begin{align}
        \lim_{M\rightarrow \infty} \gamma^{\sf WMV} &=\frac{1}{2} \mathbb{E}\left[ \ln \frac{1 - p_{m,n}^t}{p_{m,n}^t} \left( \frac{1}{2} - p_{m,n}^t \right)\right] \nonumber\\
        &=    \frac{1}{2a}\int_{0}^{a}  \ln \frac{1 - u}{u} \left( \frac{1}{2} - u \right) {\rm d}u 
        \nonumber\\
        &= \frac{1}{4}\left(1+(1-a)\ln\frac{1-a}{a}\right).
    \end{align}
\end{corollary}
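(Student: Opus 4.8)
The plan is to recognize $\gamma^{\sf WMV}$ as an empirical mean of i.i.d.\ random variables, pass to the limit via the law of large numbers, and then evaluate a single deterministic integral. The first equality is essentially probabilistic and the last two are pure calculus.

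First I would establish the first line. Since the cross-over probabilities $p_{1,n}^t,\ldots,p_{M,n}^t$ are i.i.d., the summands $h(p_{m,n}^t):=\ln\frac{1-p_{m,n}^t}{p_{m,n}^t}\bigl(\frac{1}{2}-p_{m,n}^t\bigr)$ are i.i.d.\ as well, so $\gamma^{\sf WMV}=\frac{1}{2M}\sum_{m=1}^M h(p_{m,n}^t)$ is a scaled sample average. To invoke the strong law of large numbers I first check integrability: as $u\to 0^+$ the function $h(u)$ behaves like $\tfrac{1}{2}\ln\frac{1}{u}$, which is integrable against the uniform density on $[0,a]$ because $\int_0^a \ln u\,\mathrm{d}u = a\ln a - a$ is finite, while near $u=a\le\frac{1}{2}$ the function is bounded. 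Hence $\mathbb{E}[h(p_{m,n}^t)]<\infty$, and the SLLN gives $\gamma^{\sf WMV}\to\frac{1}{2}\mathbb{E}[h(p_{m,n}^t)]$ almost surely. Writing the expectation against the density $\frac{1}{a}\mathbf{1}_{[0,a]}$ then yields the second line and reduces the claim to computing $I=\int_0^a \ln\frac{1-u}{u}\bigl(\frac{1}{2}-u\bigr)\,\mathrm{d}u$.

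For the closed form I would integrate by parts, taking the antiderivative $v(u)=\frac{u(1-u)}{2}$ of $\frac{1}{2}-u$, so that $I=\bigl[v(u)\ln\frac{1-u}{u}\bigr]_0^a-\int_0^a v(u)\,\frac{\mathrm{d}}{\mathrm{d}u}\ln\frac{1-u}{u}\,\mathrm{d}u$. The key simplification is $\frac{\mathrm{d}}{\mathrm{d}u}\ln\frac{1-u}{u}=-\frac{1}{u(1-u)}$, which makes the product $v(u)\,\frac{\mathrm{d}}{\mathrm{d}u}\ln\frac{1-u}{u}\equiv-\frac{1}{2}$, so the remaining integral collapses to $\frac{a}{2}$. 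Evaluating the boundary term at $u=a$ gives $\frac{a(1-a)}{2}\ln\frac{1-a}{a}$, hence $I=\frac{a}{2}\bigl(1+(1-a)\ln\frac{1-a}{a}\bigr)$; multiplying by $\frac{1}{2a}$ produces the stated $\frac{1}{4}\bigl(1+(1-a)\ln\frac{1-a}{a}\bigr)$.

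The main obstacle is the lower boundary term at $u=0$, where $v(u)\to 0$ but $\ln\frac{1-u}{u}\to+\infty$. I would resolve this by noting $v(u)\ln\frac{1-u}{u}\sim -\frac{u}{2}\ln u\to 0$, so the contribution at the origin vanishes; rigorously, the integration by parts is performed on $[\varepsilon,a]$ and one lets $\varepsilon\to 0$, with the same integrability estimate controlling the limit. Everything else is a clean, routine cancellation, so I expect this boundary analysis to be the only point requiring care.
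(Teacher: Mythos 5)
Your proof is correct, and the computation checks out: the law of large numbers step (with the integrability check near $u=0$, where $h(u)\sim -\tfrac{1}{2}\ln u$), the observation that $v(u)\,\tfrac{\mathrm{d}}{\mathrm{d}u}\ln\tfrac{1-u}{u}\equiv-\tfrac{1}{2}$, and the vanishing boundary term at the origin all hold, and the final expression $\tfrac{1}{4}\bigl(1+(1-a)\ln\tfrac{1-a}{a}\bigr)$ is consistent with the paper's sanity check that the limit equals $\tfrac{1}{4}$ at $a=\tfrac{1}{2}$. The paper actually states Corollary~1 without supplying any proof, so there is nothing to compare against; your argument is the natural one the authors evidently have in mind, and your care with the singular lower endpoint is exactly the point a complete write-up needs.
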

The error exponent of WMV decoding is intricately tied to the distribution of the stochastic computing error probability, $p_{m,n}^t$. Here, the distribution of $p_{m,n}^t$ can play a pivotal role in shaping the characteristics of decoding errors as the number of workers increases. To illustrate, consider the scenario where $p_{m,n}^t$ follows a uniform distribution in the range of 0 to 1/2, represented by the parameter $a=1/2$. In this example, the asymptotic value of the error exponent becomes
\begin{align}
     \lim_{M\rightarrow \infty} \gamma^{\sf WMV}=  \frac{1}{4}.
\end{align}
As the value of $a$ decreases, the asymptotic error exponent increases, confirming that a lower computing error probability $p_{m,n}^t$ plays a crucial role in enhancing the overall performance of WMV decoding.

To provide more insight into how the heterogeneous batch size in computing the stochastic gradient affects the decoding error probability, we introduce a lemma that provides an upper bound on $p_{m,n}^t$ in terms of the batch size $B_m$. 

\begin{lemma}[Upper bound on the computing error probability] \label{lem:2}
Consider a scenario in which the worker $m \in [M]$ employs a mini-batch size of $B_m \in \mathbb{N}$ during the sign computation of the stochastic gradient, represented as $Y_{m,n}^t={\sf sign}\left({g}_{m,n}^t\right)$. The computing error probability that the computed result $Y_{m,n}^t$ does not match the true gradient sign value $U_{m,n}^t$ is upper bounded by
    \begin{align}
        p_{m,n}^t \le \frac{\sigma}{\sqrt{B_m}}.
    \end{align}
\end{lemma}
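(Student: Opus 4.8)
The plan is to bound the sign-flip probability by a magnitude-deviation probability for the stochastic gradient, control that deviation through a first-moment (Markov) argument strengthened with Jensen's inequality, and extract the $1/\sqrt{B_m}$ factor from the mini-batch variance reduction. First I would reduce the sign-error event to a deviation event. By symmetry it suffices to treat a coordinate with $U_n^t = \mathsf{sign}(\bar g_n^t) = +1$, so $\bar g_n^t > 0$; the case $U_n^t = -1$ is identical. An error $Y_{m,n}^t \ne U_n^t$ can occur only when $g_{m,n}^t \le 0$, which forces $\bar g_n^t - g_{m,n}^t \ge \bar g_n^t$, and hence $|g_{m,n}^t - \bar g_n^t| \ge |\bar g_n^t|$. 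Consequently,
\[
p_{m,n}^t = \mathbb{P}\!\left[U_n^t \ne Y_{m,n}^t\right] \le \mathbb{P}\!\left[\,|g_{m,n}^t - \bar g_n^t| \ge |\bar g_n^t|\,\right].
\]

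Next I would apply Markov's inequality to the nonnegative variable $|g_{m,n}^t - \bar g_n^t|$ and then Jensen's inequality to pass to the normalized second moment appearing in Assumption \ref{ass:3}:
\[
\mathbb{P}\!\left[\,|g_{m,n}^t - \bar g_n^t| \ge |\bar g_n^t|\,\right] \le \frac{\mathbb{E}\!\left[|g_{m,n}^t - \bar g_n^t|\right]}{|\bar g_n^t|} \le \sqrt{\mathbb{E}\!\left[\frac{(g_{m,n}^t - \bar g_n^t)^2}{|\bar g_n^t|^2}\right]}.
\]
The decisive final step is the variance reduction: since $g_{m,n}^t$ is the empirical mean of $B_m$ independent per-sample gradients whose normalized variance is bounded by $\sigma^2$ (Assumption \ref{ass:3}), the normalized variance of the mini-batch average is at most $\sigma^2 / B_m$. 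Substituting this bound then yields $p_{m,n}^t \le \sqrt{\sigma^2 / B_m} = \sigma / \sqrt{B_m}$.

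I expect the main subtlety to lie in two places. First, one must route the tail bound through the first absolute moment (Markov combined with $\mathbb{E}|X| \le \sqrt{\mathbb{E}X^2}$) rather than through Chebyshev's inequality: a direct Chebyshev bound would give $\sigma^2 / B_m$, whose dependence on $B_m$ differs from the claimed $\sigma / \sqrt{B_m}$, so the Jensen step is precisely what produces the stated square-root exponent. Second, the $1/\sqrt{B_m}$ scaling rests entirely on reading the variance bound in Assumption \ref{ass:3} at the per-sample level and invoking the IID property of the mini-batch so that variances add; the tie case $g_{m,n}^t = 0$ poses no difficulty, as it is already absorbed by the non-strict inequality $|g_{m,n}^t - \bar g_n^t| \ge |\bar g_n^t|$.
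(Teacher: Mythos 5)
Your proposal follows essentially the same route as the paper's proof: reduce the sign-flip event to the deviation event $|g_{m,n}^t - \bar g_n^t| \ge |\bar g_n^t|$, apply Markov's inequality to the first absolute moment, upgrade to the second moment via Jensen, and invoke Assumption~\ref{ass:3} to obtain $\sigma/\sqrt{B_m}$. Your explicit remark that the $1/\sqrt{B_m}$ factor requires reading the variance bound at the per-sample level (with IID mini-batch averaging) is a fair and correct gloss on what the paper's final step leaves implicit.
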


\begin{proof}
    See Appendix \ref{sec:proof_lem1}.
\end{proof}
The upper bound in Lemma \ref{lem:2} on the computing error probability for each worker exhibits an inverse proportionality to the square root of the batch size. This finding aligns with our intuitive understanding that employing larger batch sizes effectively mitigates the probability of stochastic gradient computation errors. Moreover, thanks to Assumption \ref{ass:3}, the upper bound on the computing error probability in Lemma \ref{lem:2} does not depend on $n$ or $t$, which avoids the necessity of finding the maximum $p_{m,n}^t$ by considering all coordinates and iterations. This interpretation posits the stochastic gradient computation error as analogous to the cross-over probability of an equivalent BSC, which is explained in our coding theoretic interpretation in \eqref{eqn:def_cross}.

Now, leveraging Lemma \ref{lem:2} and Theorem \ref{thm:2}, we introduce a more intuitive form of the upper bound on the WMV decoding error probability in terms of the batch size in the following corollary.

\begin{corollary}[Decoding error bound with mini-batch sizes] \label{cor:2}
Suppose $0< p_{m,n}^t <\frac{1}{4}$. Then, the WMV decoding error probability is upper bounded by 
  \begin{align}
        P_{\sf E}^{\sf WMV} \le \exp\left[-\frac{M}{8}\ln \left( \frac{3}{4} \frac{\sqrt{B_\mathsf{GM}}}{\sigma } \right) \right],
    \end{align}
where $B_\mathsf{GM} = \left( \prod_{m=1}^M B_m \right)^{\frac{1}{M}}$ is the geometric mean of workers' batch sizes. 
\end{corollary}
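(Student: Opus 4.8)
The plan is to lower-bound the error exponent $\gamma^{\sf WMV}$ appearing in Theorem \ref{thm:2} by a quantity expressed purely through the batch sizes, and then substitute this bound back into the exponential form $P_{\sf E}^{\sf WMV} \le \exp(-M\gamma^{\sf WMV})$. Writing $p_m = p_{m,n}^t$ for brevity, recall that
\begin{align}
M\gamma^{\sf WMV} = \frac{1}{2}\sum_{m=1}^M \ln\frac{1-p_m}{p_m}\left(\frac{1}{2}-p_m\right).
\end{align}
The whole argument then reduces to showing that this sum is at least $\frac{M}{8}\ln\left(\frac{3}{4}\frac{\sqrt{B_{\sf GM}}}{\sigma}\right)$.

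First I would work term by term. The hypothesis $0 < p_m < \frac{1}{4}$ gives $\frac{1}{2}-p_m > \frac{1}{4}$ and $1-p_m > \frac{3}{4}$; moreover $p_m < \frac{1}{2}$ forces $\ln\frac{1-p_m}{p_m} > 0$, which is what legitimizes replacing $\frac{1}{2}-p_m$ by the smaller constant $\frac{1}{4}$ while keeping the strict inequality. Combining this with Lemma \ref{lem:2}, which yields $p_m \le \sigma/\sqrt{B_m}$ and hence $1/p_m \ge \sqrt{B_m}/\sigma$, produces the chain
\begin{align}
\frac{1}{2}\ln\frac{1-p_m}{p_m}\left(\frac{1}{2}-p_m\right) > \frac{1}{8}\ln\frac{1-p_m}{p_m} > \frac{1}{8}\ln\left(\frac{3}{4}\frac{\sqrt{B_m}}{\sigma}\right),
\end{align}
where the last step uses monotonicity of the logarithm applied to the lower bound $\frac{1-p_m}{p_m} > \frac{3}{4}\frac{\sqrt{B_m}}{\sigma}$.

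Summing over $m \in [M]$ and splitting each logarithm as $\ln\left(\frac{3}{4}\frac{\sqrt{B_m}}{\sigma}\right) = \ln\frac{3}{4\sigma} + \frac{1}{2}\ln B_m$, the constant part contributes $M\ln\frac{3}{4\sigma}$ while the batch-size part collapses through the geometric-mean identity $\frac{1}{M}\sum_{m=1}^M \ln B_m = \ln B_{\sf GM}$ into $\frac{M}{2}\ln B_{\sf GM} = M\ln\sqrt{B_{\sf GM}}$. Recombining the logarithms gives $M\gamma^{\sf WMV} > \frac{M}{8}\ln\left(\frac{3}{4}\frac{\sqrt{B_{\sf GM}}}{\sigma}\right)$, and the claimed bound follows at once from Theorem \ref{thm:2}.

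There is no deep obstacle here; the difficulty is entirely bookkeeping, and the main point requiring genuine care is tracking inequality directions. Lower-bounding the exponent is precisely what yields an upper bound on the probability, so each of the replacements ($\frac{1}{2}-p_m$, $1-p_m$, and $1/p_m$) must be oriented to decrease the summand, and the reciprocal substitution from Lemma \ref{lem:2} inverts the batch-size inequality. The positivity of $\ln\frac{1-p_m}{p_m}$, guaranteed by $p_m < \frac{1}{2}$, is the subtle ingredient that keeps the first replacement strict. A secondary subtlety worth flagging is that when $\frac{3}{4}\frac{\sqrt{B_{\sf GM}}}{\sigma} < 1$ the logarithm is negative and the stated bound is trivially satisfied (it exceeds one); the meaningful regime is when the geometric-mean batch size is large enough that the argument exceeds unity, in which case the bound decays exponentially in $M$.
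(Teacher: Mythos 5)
Your proposal is correct and follows essentially the same route as the paper: the paper carries out the identical term-by-term estimates ($\tfrac{1}{2}-p_{m,n}^t>\tfrac14$, $1-p_{m,n}^t>\tfrac34$, and $p_{m,n}^t\le \sigma/\sqrt{B_m}$ from Lemma~\ref{lem:2}) in product form on $\prod_m (p_{m,n}^t/(1-p_{m,n}^t))^{\frac12(\frac12-p_{m,n}^t)}$, whereas you perform the same manipulations additively on the exponent $M\gamma^{\sf WMV}$ before recombining via the geometric mean. The inequality directions and the positivity of $\ln\frac{1-p_{m,n}^t}{p_{m,n}^t}$ are handled correctly, so no gap remains.
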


\begin{proof}
    See Appendix \ref{sec:proof_cor2}.
\end{proof}

The upper bound on decoding error probability in Corollary \ref{cor:2} shows a remarkable trend: increasing the geometric mean of workers' batch sizes improves decoding accuracy. This substantiates our intuitive understanding that a worker using a larger mini-batch size significantly contributes to refining decoding performance due to the essence of the geometric mean, and suggests the possibility that signSGD can overcome the limitations of having some small batch sizes. 

\begin{remark}[Decoding error bound of signSGD-MV]
    Referring to \cite{yue2022federated}, an upper bound on the decoding error probability for signSGD-MV can be also derived as
    \begin{align} \label{eqn:MV_errorbound}
        P_\mathsf{E}^\mathsf{MV} &\le \left[ 2 \bar{p}_n^t \exp \left( 1 - 2 \bar{p}_n^t \right) \right]^\frac{M}{2} \nonumber \\
        &= \exp \left( -M \gamma^\mathsf{MV} \right),
    \end{align}
    where $\gamma^\mathsf{MV} = \bar{p}_n^t - \frac{1}{2} \ln \left( 2e \bar{p}_n^t \right)$ is the error exponent of the MV decoder and $\bar{p}_n^t = \frac{1}{M} \sum_{m=1}^M p_{m,n}^t$ is the average of workers' computing error probabilities. Armed with Lemma \ref{lem:2}, we can express \eqref{eqn:MV_errorbound} in terms of mini-batch sizes as
    \begin{align}
        P_\mathsf{E}^\mathsf{MV} \le \exp \left[ -\frac{M}{2} \ln \left( \frac{1}{2e} \frac{\sqrt{B_\mathsf{AM}}}{\sigma} \right) \right],
    \end{align}
    where $B_\mathsf{AM} = \left( \frac{1}{M} \sum_{m=1}^M \frac{1}{\sqrt{B_m}} \right)^{-2}$ is the value associated with the arithmetic mean of batch sizes. The analysis reveals that the MV decoder's performance is linked to the arithmetic mean of the batch sizes. Notably, the variable $B_\mathsf{AM}$ is significantly influenced by the presence of a minimum batch size. This observation offers insight into the potential divergence of signSGD-MV in heterogeneous computing environments, particularly in scenarios where edge devices employ small batch sizes.
\end{remark}

\subsection{Decoding Error Probability Bounds with Imperfect Knowledge of $p_{m,n}^t$ } \label{sec:im}

When implementing the signSGD-FV algorithm, acquiring perfect knowledge of the computing error probability $p_{m,n}^t$ is impractical in a distributed learning system. In this subsection, we develop an upper bound on the decoding error probability when imperfect knowledge of ${p}_{m,n}^t$ is assumed.  Recall that the signSGD-FV algorithm uses the estimate of LLR weight $\hat{w}_{m,n}^t$ derived from ${\hat p}_{m,n}^t$ according to \eqref{eqn:est_weight}. To provide a tractable analysis result, we impose an additional assumption, this one on the uncertainty bounds of the LLR weights $\hat{w}_{m,n}^t$. 

\begin{assumption}[Uncertainty of LLR weights] \label{ass:4}
For every $n \in [N]$, $m \in [M]$, and $t \in [T]$, the ratio between the estimated and actual LLR weights is constrained by  
\begin{align}
   1-\delta_{\sf min} \leq \frac{\hat{w}_{m,n}^t}{w_{m,n}^t} \leq 1+\delta_{\sf max},\label{eqn:beta}
\end{align}
where $\delta_{\sf max}$ and $\delta_{\sf min}$ are positive constants. 
\end{assumption}

Leveraging this weight bound assumption, we present an upper bound on the decoding error probability of the proposed signSGD-FV.

\begin{theorem}[WMV decoding error bound with imperfect $p_{m,n}^t$] \label{thm:3}
Suppose the server performs WMV decoding with imperfect knowledge of ${\hat p}_{m,n}^t$, i.e., 
\begin{align}
    {\hat U}_{{\sf WMV}, n}^t 
    &= {\sf sign} \left( \sum_{m=1}^M \ln \frac{1 - \hat{p}_{m,n}^t}{\hat{p}_{m,n}^t} Y_{m,n}^t \right),
\end{align} 
where $1-\delta_{\sf min} \leq \frac{\hat{w}_{m,n}^t}{w_{m,n}^t} \leq 1+\delta_{\sf max}$. Then, the decoding error probability is upper bounded by
    \begin{align}
    P_{\sf E}^{\sf WMV} \le \exp\left(-  M \left(\frac{1 - \delta_{\sf min}}{1 + \delta_{\sf max}}  \right)\gamma^{\sf WMV}\right),
    \end{align}
 where   $\gamma^{\sf WMV}=\frac{1}{2M}\sum_{m=1}^M \ln \frac{1 - p_{m,n}^t}{p_{m,n}^t} \left( \frac{1}{2} - p_{m,n}^t \right)$.
\end{theorem}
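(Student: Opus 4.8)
The plan is to mirror the Chernoff-bound argument behind Theorem \ref{thm:2}, carrying the perturbed weights $\hat{w}_{m,n}^t$ through the computation and invoking Assumption \ref{ass:4} only at the two places where it is genuinely needed. Fix a coordinate $n$ and iteration $t$ and suppress both indices. By the symmetry of the BSC model it suffices to bound the error probability conditioned on $U = +1$; the case $U=-1$ is identical. Under $U=+1$ I would introduce the error indicators $X_m = \mathbf{1}_{\{Y_m = -1\}}$, which are independent $\mathrm{Bernoulli}(p_m)$ variables. Writing $Y_m = 1 - 2X_m$, the decoding error $\{\sum_m \hat{w}_m Y_m \le 0\}$ is equivalent to $\{\sum_m \hat{w}_m X_m \ge \tfrac12 \sum_m \hat{w}_m\}$, i.e.\ to the centered event $\{\sum_m \hat{w}_m (X_m - p_m) \ge \theta\}$ with deviation $\theta := \sum_m \hat{w}_m(\tfrac12 - p_m)$. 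Note $\theta>0$ because Assumption \ref{ass:4} forces $w_m>0$ (hence $p_m<\tfrac12$) and $\hat{w}_m>0$.

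Next I would apply the exponential Markov inequality with a free parameter $s>0$ and factorize over the independent workers,
\begin{align}
P_{\sf E}^{\sf WMV} \le e^{-s\theta}\prod_{m=1}^M \mathbb{E}\left[e^{s\hat{w}_m(X_m-p_m)}\right].
\end{align}
Each factor is exactly the centered-Bernoulli moment generating function on the left-hand side of Lemma \ref{lem:1}, so taking the parameter there to be $s\hat{w}_m$ yields
\begin{align}
\mathbb{E}\left[e^{s\hat{w}_m(X_m-p_m)}\right] \le \exp\left(\frac{1-2p_m}{4 w_m}\, s^2 \hat{w}_m^2\right),
\end{align}
with $w_m = \ln\frac{1-p_m}{p_m}$. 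Collecting factors gives $P_{\sf E}^{\sf WMV} \le \exp(-s\theta + s^2 C)$, where $C := \sum_m \frac{(1-2p_m)\hat{w}_m^2}{4 w_m}$.

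The crux of the argument, and the step where care is required, is to relate the quadratic coefficient $C$ back to the linear deviation $\theta$ without losing a power of the perturbation ratio. The key observation is to split $\hat{w}_m^2/w_m = \hat{w}_m \cdot (\hat{w}_m/w_m)$ and bound \emph{only the ratio} by $1+\delta_{\sf max}$, so that one factor of $\hat{w}_m$ survives:
\begin{align}
C \le (1+\delta_{\sf max})\sum_{m=1}^M \frac{1-2p_m}{4}\,\hat{w}_m = \frac{1+\delta_{\sf max}}{2}\,\theta,
\end{align}
using $\frac{1-2p_m}{4} = \frac12(\tfrac12 - p_m)$. The naive bound $\hat{w}_m^2 \le (1+\delta_{\sf max})^2 w_m^2$ would instead cost a square and yield a strictly weaker exponent, so keeping the ratio to first power is essential to reproducing the linear dependence in the statement.

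Substituting this into $-s\theta + s^2 C$ and minimizing $-s\theta + s^2\frac{1+\delta_{\sf max}}{2}\theta$ over $s>0$, attained at $s^\star = \frac{1}{1+\delta_{\sf max}}$, gives $P_{\sf E}^{\sf WMV} \le \exp\!\big(-\frac{\theta}{2(1+\delta_{\sf max})}\big)$. Finally I would invoke the lower uncertainty bound $\hat{w}_m \ge (1-\delta_{\sf min})w_m$ together with $\tfrac12 - p_m>0$ to obtain $\theta \ge (1-\delta_{\sf min})\sum_m w_m(\tfrac12 - p_m) = 2M(1-\delta_{\sf min})\gamma^{\sf WMV}$, which upon substitution produces the claimed bound $P_{\sf E}^{\sf WMV} \le \exp\!\big(-M\frac{1-\delta_{\sf min}}{1+\delta_{\sf max}}\gamma^{\sf WMV}\big)$. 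As a consistency check, setting $\delta_{\sf min}=\delta_{\sf max}=0$ gives $s^\star=1$ and $C=\theta/2$ exactly, recovering $P_{\sf E}^{\sf WMV}\le\exp(-M\gamma^{\sf WMV})$ of Theorem \ref{thm:2}.
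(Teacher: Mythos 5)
Your proposal is correct and follows essentially the same route as the paper's proof: a Chernoff bound with the perturbed weights, Lemma \ref{lem:1} applied with parameter $s\hat{w}_{m,n}^t$, the same one-sided use of the ratio bound $\hat{w}_{m,n}^t/w_{m,n}^t \le 1+\delta_{\sf max}$ inside the moment-generating-function estimate (keeping one factor of $\hat{w}_{m,n}^t$ intact), the optimizer $s^\star = \tfrac{1}{1+\delta_{\sf max}}$, and the final lower bound $\hat{\eta} \ge (1-\delta_{\sf min})\cdot 2M\gamma^{\sf WMV}$. The step you flag as the crux is exactly the manipulation in the paper's display \eqref{eqn:im2}, so there is nothing to add.
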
 
\begin{proof}
    See Appendix \ref{sec:thm3}.
\end{proof}

Theorem \ref{thm:3} clearly shows how imprecise knowledge regarding the LLR weight ${\hat w}_{n,m}^t$ significantly impacts the upper bound on the decoding error probability. To elucidate this point, it is instructive to compare the upper bound for the scenario of perfect knowledge, as established in Theorem \ref{thm:2}.

Analogous to the scenario in which perfect knowledge is assumed, the decoding error probability experiences an exponential decrease as the number of workers increases. Conversely, a noteworthy distinction emerges: the error exponent term undergoes a reduction by a factor of $\left(\frac{1 - \delta_{\sf min}}{1 + \delta_{\sf max}}\right)$ -- a value less than one. This fact shows that the error probability worsens as the estimation of the LLR weights becomes more inaccurate. In the extreme scenario of perfect LLR weight knowledge, where $\frac{1 - \delta_{\sf min}}{1 + \delta_{\sf max}} = 1$, the upper bound in Theorem \ref{thm:3} reduces to the result in Theorem \ref{thm:2}. This alignment confirms that the upper bound analysis generalizes the case of perfect knowledge of the LLR weights.

\section{Simulation Results}
In this section, we present simulation results to validate the performance of signSGD-FV on widely employed benchmark datasets.

\subsection{Settings}

The simulation task for signSGD-FV validation involves image classification using MNIST \cite{lecun1998gradient}, CIFAR-10, and CIFAR-100 \cite{krizhevsky2009learning} datasets. The MNIST dataset comprises 60,000 training images, while CIFAR-10 comprises 50,000 training images. All the datasets include 10,000 test images, and each image sample is categorized into one of 10 classes, except for CIFAR-100, which has 100 classes. We employ data augmentation techniques such as random cropping and horizontal random flipping for the images to enhance learning performance. To ensure uniformity across distributed learning system workers, we assign each worker an equal number of data samples. The data is assumed to be IID, with each worker uniformly handling images from all 10 classes.

We employ a CNN \cite{lecun1998gradient} tailored explicitly for the MNIST dataset in model training. Simultaneously, we utilize a ResNet \cite{he2016deep} model designed for the CIFAR-10 and CIFAR-100 datasets. The CNN architecture, having a parameter count of $N = 5 \times 10^5$, comprises two convolutional layers, each with a kernel size of $5 \times 5$, complemented by two fully-connected layers. On the other hand, the ResNet model adopted for this study is ResNet-56, characterized by 56 layers and a parameter count of $N = 8.5 \times 10^5$. For the hyper-parameters, we select a learning rate of $\delta = 10^{-3} \text{ and } 10^{-1}$ for optimizers based on signSGD and SGD in Table \ref{tab:benchmark}, respectively. Momentum and weight decay are deliberately excluded from our simulations.

\begin{table}[t] 
\caption{Mini-batch modes}
\vspace{-1em}
\label{tab:batchmode}
\begin{center}
\begin{small}
\begin{sc}
\begin{tabular}{P{0.2\columnwidth} P{0.25\columnwidth} P{0.25\columnwidth}}
\toprule
\multirow{3}{*}{Batch mode} & \multicolumn{2}{c}{\# of workers} \\
\cline{2-3}
 & Small size & \multirow{2}{*}{Large size} \\
 & $\left( B_m = 4 \right)$ &  \\
\midrule
1 & 0 & $M$ \\
2 & 0.6$M$ & 0.4$M$ \\
3 & 0.8$M$ & 0.2$M$ \\
4 & $M-1$ & 1 \\
\bottomrule
\end{tabular}
\end{sc}
\end{small}
\end{center}
\end{table}

To highlight the heterogeneous mini-batch setting for the workers, we introduce the terminology called \textit{batch mode} outlined in Table \ref{tab:batchmode}. The workers' average batch size is 64, with a fixed small batch size of 4. Each batch mode incorporates a distinct distribution of workers with varying large and small batch sizes. For instance, Batch mode 1 represents a homogeneous mini-batch setting. Conversely, batch mode 4 indicates a scenario where all workers, excluding worker 1, use batch size 4 to compute the stochastic gradient per iteration. As the mode index increases, the number of workers with large batch sizes decreases, with dominance shifting towards those with smaller batch sizes.

\subsection{Effects of Heterogeneous Batch Sizes $B_m$}

\begin{figure}[!t]
    \centerline{
    \subfloat[CIFAR-10 dataset]{%
        \includegraphics[width=0.5\columnwidth]{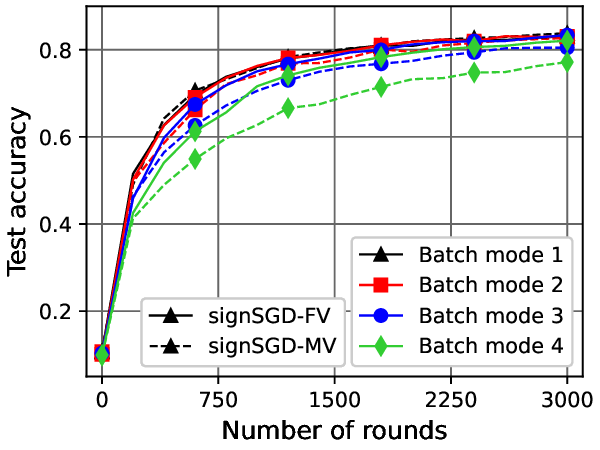}}
    \hfill
    \subfloat[MNIST dataset]{%
        \includegraphics[width=0.5\columnwidth]{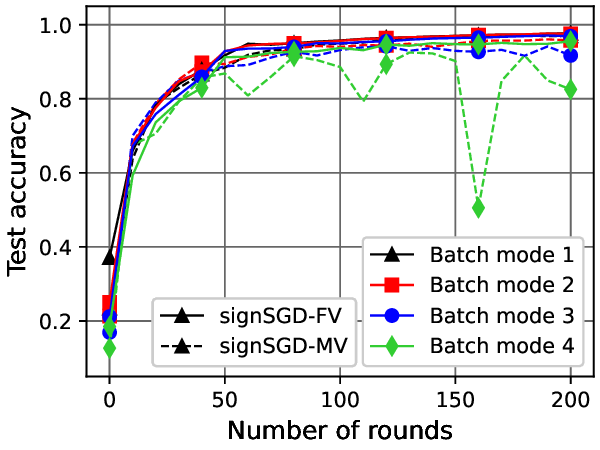}}
    }
    \caption{Test accuracy vs. training rounds varying the batch mode with $M = 15$ and $T_\mathsf{in} = 100$.}
    \label{fig:results_batch} 
\end{figure}

In Fig. \ref{fig:results_batch}, test accuracy is depicted versus training rounds, with $M=15$ and $T_\mathsf{in} = 100$. In a homogeneous batch settings (batch mode 1), signSGD-MV and signSGD-FV exhibit similar test accuracies across datasets. However, as the mini-batch mode index increases, signSGD-FV consistently outperforms signSGD-MV. The performance gap peaks in batch mode 4, even leading to non-convergence of signSGD-MV in the case of MNIST. This highlights the instability of signSGD-MV in heterogeneous batch settings. The observed improvement of signSGD-FV, especially in the presence of dominant workers in distributed learning systems, suggests that LLR weights effectively distinguish workers' computing capabilities. This aligns with insights from Sec. \ref{sec:error_bound}, emphasizing the efficacy of LLR weights in addressing challenges posed by disparate worker contributions during training.

\subsection{Effects of the Number of Workers $M$}

\begin{figure}[!t]
    \centerline{
    \subfloat[CIFAR-10 dataset]{%
        \includegraphics[width=0.5\columnwidth]{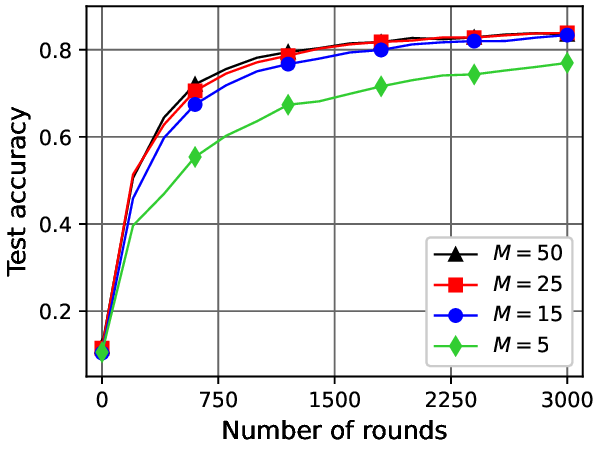}}
    \hfill
    \subfloat[CIFAR-100 dataset]{%
        \includegraphics[width=0.5\columnwidth]{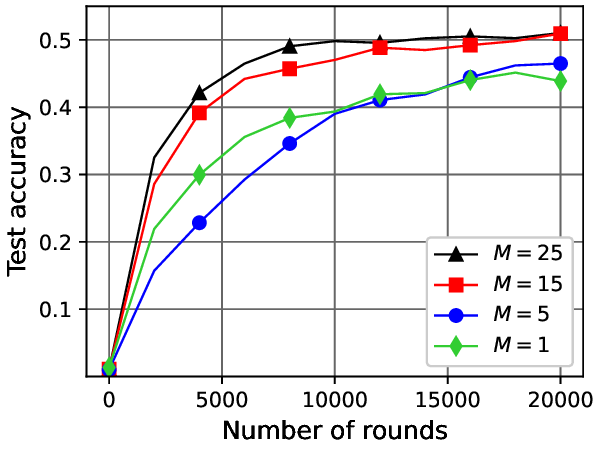}}
    }
    \caption{Test accuracy vs. training rounds on signSGD-FV varying the number of workers where the batch mode is 3 and $T_\mathsf{in} = 100$.}
    \label{fig:results_numworkers} 
\end{figure}

Fig. \ref{fig:results_numworkers} shows the test accuracy varying the number of workers $M$, with a fixed batch mode of 3 and $T_\mathsf{in} = 100$. From the CIFAR-10 and CIFAR-100 datasets, a clear trend can be seen in which the test accuracy of signSGD-FV consistently improves as the number of workers $M$ increases. The impact of augmenting $M$ becomes more pronounced when $M$ is small. This is attributed to the fact that the decrease in the exponential function, which is the form of decoding error bound as \eqref{eqn:WMV_errorbound} in Sec. \ref{sec:error_bound}, is much more significant in small $M$ region. Despite the above tendency, according to the results of $M=1$ and $M=5$ cases in Fig. \ref{fig:results_numworkers}, the convergence rate of signSGD-FV might be slower if the number of workers is insufficient. This aligns with the analysis in Sec. \ref{sec:im} that the uncertainty of estimated LLR weights becomes larger as $M$ gets smaller.


\subsection{Effects of Weight Uncertainty} \label{sec:result_init}

\begin{figure}[!t]
    \centerline{
    \subfloat[$\hat{p}_{m,n}^t$ with additive noise]{%
        \includegraphics[width=0.5\columnwidth]{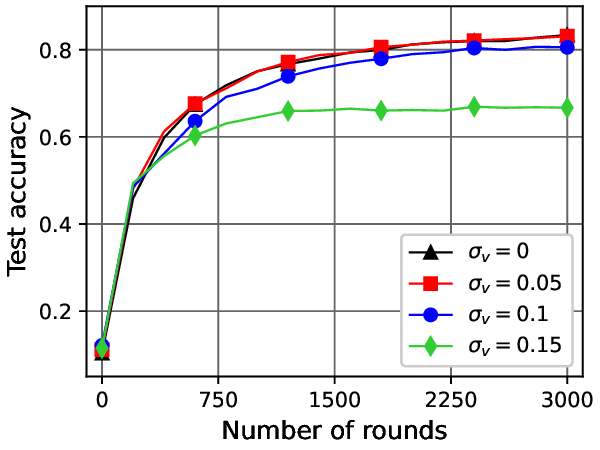}}
    \hfill
    \subfloat[Varying initial phase duration]{%
        \includegraphics[width=0.5\columnwidth]{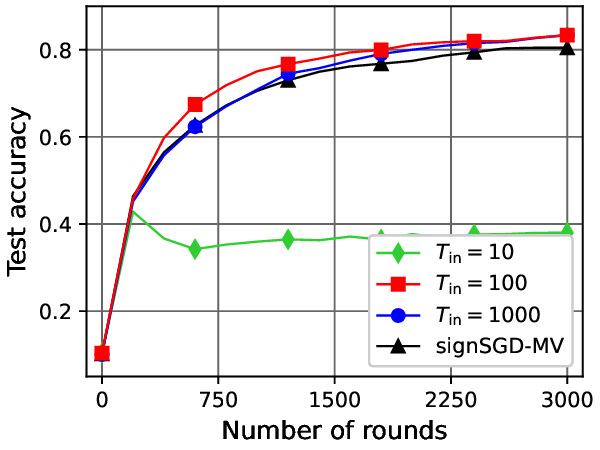}}
    }
    \caption{Test accuracy comparison on CIFAR-10 dataset by varying the uncertainty of estimated LLR weights for the batch mode 3 and $M = 15$.}
    \label{fig:results_initial} z
\end{figure}

In Fig. \ref{fig:results_initial}-(a), the test accuracy outcomes are presented, depicting the impact of Gaussian noise $v_{m,n}^t \sim \mathcal{N} \left( 0, \sigma_v^2 \right)$ for each $m \in [M]$, $n \in [N]$, and $t \in [T]$ on the estimated computing error probabilities, i.e., $\hat{p}_{m,n}^t + v_{m,n}^t$. The results illustrate a degradation in test accuracy as the noise variance (with values of $0.05$, $0.1$, and $0.15$) increases, aligning with the findings in Sec. \ref{sec:im}. This underscores the need for careful design in estimating $\hat{p}_{m,n}^t$ for signSGD-FV to fully harness the benefits of LLR weights. In Fig. \ref{fig:results_initial}-(b), the variation in test accuracy is explored concerning the initial MV decoding duration in Algorithm \ref{alg:signSGD-FV}. Notably, the accuracy experiences a decrease when the duration deviates from $T_\mathsf{in}=100$. This observation emphasizes that both a shortage of decoding error samples and an excessive bias towards the MV decoder can detrimentally affect the learning performance of signSGD-FV.

\subsection{Test Accuracy with Communication-Efficiency} \label{sec:alg_comp}

\begin{table}[t] 
\caption{Distributed learning algorithms}
\vspace{-1em}
\label{tab:benchmark}
\begin{center}
\begin{small}
\begin{sc}
\begin{tabular}{P{0.3\columnwidth} P{0.6\columnwidth}}
\toprule
Algorithms & Total communication costs \\
\midrule
SGD \cite{zinkevich2010parallelized} & $[32MN + 32MN] \times T$ \\
Top-K SGD \cite{stich2018sparsified} & $\left[ M \! \left( 32K + K \! \log_2 \! \left( \frac{N}{K} \right) \right) + 32MN \right] \times T$ \\
signSGD-MV \cite{bernstein2018asignsgd} & $[MN + MN] \times T$ \\
signSGD-FV & $[MN + MN] \times T$ \\
\bottomrule
\end{tabular}
\end{sc}
\end{small}
\end{center}
\end{table}

\begin{figure}[!t]
\centerline{
\subfloat[CIFAR-10 dataset with $M=25$]{%
    \includegraphics[width=0.5\columnwidth]{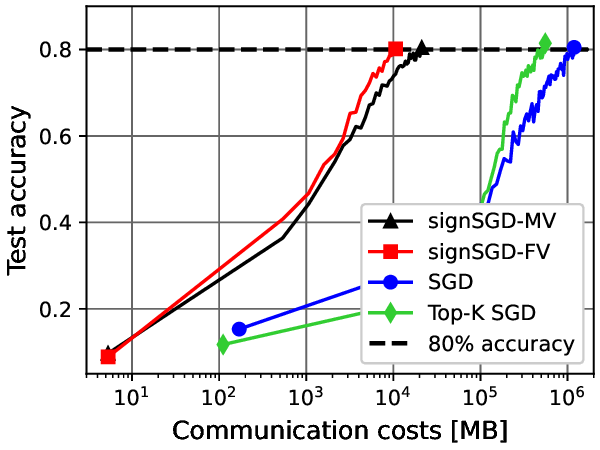}}
\hfill
\subfloat[MNIST dataset with $M=15$]{%
    \includegraphics[width=0.5\columnwidth]{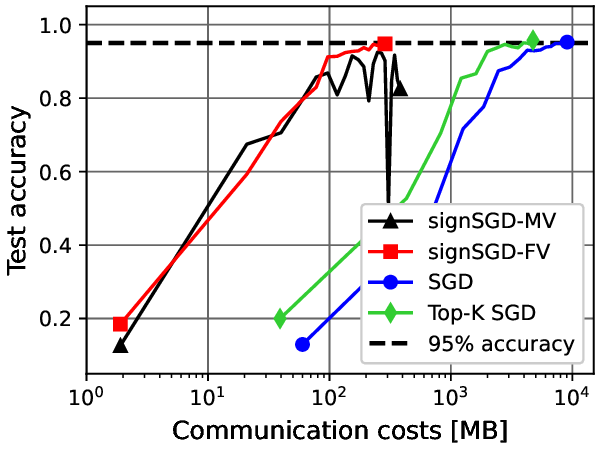}}
}
\caption{Test accuracy vs. total communication cost by comparing signSGD-FV with other conventional learning optimizers where $T_\mathsf{in}=100$, and the batch mode is 4.}
\label{fig:results_costcomp} 
\end{figure}

Fig. \ref{fig:results_costcomp} shows the learning performance in terms of communication costs with parameters set to $T_\mathsf{in}=100$ and batch mode 4. Table \ref{tab:benchmark} shows the benchmark algorithms and their associated costs. Top-$K$ SGD exhibits a sparsity level of $\frac{K}{N} = 0.1$. Notably, signSGD-FV, utilizing only 1 bit for each gradient coordinate, achieves a remarkable 30x reduction in communication costs for the MNIST dataset and an impressive 100x reduction for the CIFAR-10 dataset compared to distributed SGD optimizers. Despite Top-$K$ SGD offering a 2x reduction in communication costs compared to distributed SGD, it still incurs significantly higher costs than signSGD-FV. Additionally, while signSGD-MV can notably reduce communication costs compared to other conventional optimizers, its achieved test accuracy is considerably lower than signSGD-FV, as evidenced by the divergence of the model before reaching 95\% accuracy in Fig. \ref{fig:results_costcomp}-(b).

\section{Conclusion}

In this paper, we have introduced a communication-efficient distributed learning algorithm, signSGD-FV, based on federated voting. The key concept of federated voting is to exploit learnable weights when performing weighted majority voting for aggregating the one-bit gradient vectors per iteration. These weights play a crucial role in decoding the sign of aggregated local gradients, minimizing sign decoding errors when the edge devices hold heterogeneous computing capabilities. We also have provided a convergence rate analysis framework applicable to scenarios where the estimated weights are either perfectly or imperfectly known to the parameter server. Our analysis demonstrates that the proposed signSGD-FV guarantees convergence even in cases where edge devices employ heterogeneous mini-batch sizes. Simulation results show that the proposed algorithm outperforms the existing distributed learning algorithms in terms of convergence rates. 

An intriguing future research direction involves analyzing the convergence rates of the proposed algorithm in various practical scenarios, encompassing challenges such as adversarial attacks and diverse data distributions across heterogeneous edge devices.

{\appendices

\section{Proof of Theorem \ref{thm:1}} \label{sec:proof_thm1}
\begin{proof}
   From Assumption \ref{ass:2}, the difference between $f^{t+1}$ and $f^{t}$ is upper bounded by
    \begin{align}
        f^{t+1} - f^t &\le \left\langle \bar{\mathbf{g}}^t, \mathbf{x}^{t+1} - \mathbf{x}^t \right\rangle + \sum_{n=1}^N \frac{L_n}{2} \left( x_n^{t+1} - x_n^t \right)^2 \nonumber \\
        &= \sum_{n=1}^N \left[ - \bar{g}_n^t \cdot \delta \hat{U}_n^t + \frac{L_n}{2} \left( - \delta \hat{U}_n^t \right)^2 \right] \nonumber \\
        &= - \delta \sum_{n=1}^N \bar{g}_n^t \hat{U}_n^t + \frac{1}{2} \delta^2 \lVert \mathbf{L} \rVert_1 \nonumber \\
        &= - \delta \lVert \bar{\mathbf{g}}^t \rVert_1 + \frac{\delta^2}{2} \lVert \mathbf{L} \rVert_1 + 2\delta \sum_{n=1}^N \left| \bar{g}_n^t \right| \mathbf{1}_{ \left\{ U_n^t \ne \hat{U}_n^t \right\} }. \label{eqn:rate_MV1}
    \end{align}
    By taking expectation according to the randomness of $\hat{U}_n^t$, $f^{t+1} - f^t$ conditioned on $\mathbf{x}^t$ can be upper bounded by
    \begin{align}
        & \hspace{-1em} \mathbb{E} \left[ \left. f^{t+1} - f^t \right| \mathbf{x}^t \right] \nonumber \\
        & \le - \delta \lVert \bar{\mathbf{g}}^t \rVert_1 + \frac{\delta^2}{2} \lVert \mathbf{L} \rVert_1 + 2 \delta \sum_{n=1}^N \left| \bar{g}_n^t \right| \mathbb{P} \left[ U_n^t \ne \hat{U}_n^t \right] \nonumber \\
        & \le - \delta \lVert \bar{\mathbf{g}}^t \rVert_1 + \frac{\delta^2}{2} \lVert \mathbf{L} \rVert_1 + 2 \delta P_{\sf E}^{\sf max} \sum_{n=1}^N \left| \bar{g}_n^t \right| \nonumber \\
        &= - \delta \left( 1 - 2 P_{\sf E}^{\sf max}
         \right) \lVert \bar{\mathbf{g}}^t \rVert_1 + \frac{\delta^2}{2} \lVert \mathbf{L} \rVert_1. \label{eqn:rate_MV2}
    \end{align}
    Next, we take the expectation over $\mathbf{x}^t$, and apply a telescoping sum over the iterations, which provides
    \begin{align}
        f^1 - f^\star & \ge f^1 - \mathbb{E} \left[ f^T \right] \nonumber \\
        &= \mathbb{E} \left[ \sum_{t=1}^{T} f^t - f^{t+1} \right] \nonumber \\
        & \ge \mathbb{E} \left[ \sum_{t=1}^{T} \left\{ \delta \left( 1 - 2 P_{\sf E}^{\sf max} \right) \lVert \bar{\mathbf{g}}^t \rVert_1 - \frac{\delta^2}{2} \lVert \mathbf{L} \rVert_1 \right\} \right], \nonumber \\
        &= \delta \left( 1 - 2 P_{\sf E}^{\sf max} \right) \mathbb{E} \left[ \sum_{t=1}^{T} \lVert \bar{\mathbf{g}}^t \rVert_1 \right] -\frac{\delta^2 T}{2} \lVert \mathbf{L} \rVert_1, \label{eqn:rate_MV7}
    \end{align}
    where the last equality holds when $\delta$ is fixed according to the training round $t \in [T]$. Consequently, by substituting the learning rate $\delta = \sqrt{\frac{2 \left( f^1 - f^\star \right)}{T \lVert \mathbf{L} \rVert_1}}$ into \eqref{eqn:rate_MV7}, we obtain 
    \begin{align}
        \mathbb{E} \left[ \frac{1}{T} \sum_{t=1}^{T} \lVert \bar{\mathbf{g}}^t \rVert_1 \right] & \le \frac{1}{1 - 2 P_{\sf E}^{\sf max}} \left[ \frac{1}{\delta T} \left( f^1 - f^\star \right) + \frac{\delta}{2} \lVert \mathbf{L} \rVert_1 \right] \nonumber \\
        &= \frac{1}{1 - 2 P_{\sf E}^{\sf max}} \sqrt{\frac{2 \left( f^1 - f^\star \right) \lVert \mathbf{L} \rVert_1}{T}}. \label{eqn:rate_MV8}
    \end{align}
    This completes the proof. 
\end{proof}

\section{Proof of Theorem \ref{thm:2}} \label{sec:proof_thm2}

\begin{proof}
We commence by defining a binary random variable that indicates the decoding error event, i.e.,  $Z_{m,n}^t = {\bf 1}_{\left\{ U_n^t \ne Y_{m,n}^t \right\}}$. When employing WMV decoding with an LLR weight $w_{m,n}^t=\ln \frac{1 - p_{m,n}^t}{p_{m,n}^t}$, decoding failures arise if the cumulative sum of weights assigned to the incorrectly decoding workers exceeds half of the total weight. Using this fact, the decoding error probability is rewritten as
\begin{align}
    &\mathbb{P} \left[ U_n^t \ne {\hat U}_{{\sf WMV}, n}^t \right] \nonumber \\
    &= \mathbb{P} \left[ \sum_{m=1}^M w_{m,n}^t Z_{m,n}^t \ge \frac{1}{2} \sum_{m=1}^M w_{m,n}^t \right] \nonumber \\
    &= \mathbb{P} \left[ \sum_{m=1}^M w_{m,n}^t \left( Z_{m,n}^t - p_{m,n}^t \right) \ge \sum_{m=1}^M w_{m,n}^t \left( \frac{1}{2} - p_{m,n}^t \right) \right] \nonumber \\
    &= \mathbb{P} \left[ \sum_{m=1}^M w_{m,n}^t \bar{Z}_{m,n}^t \ge \eta \right], \label{eqn:prob_WMV1}
\end{align}
where $\bar{Z}_{m,n}^t = Z_{m,n}^t - p_{m,n}^t$ and $\eta = \sum_{m=1}^M w_{m,n}^t \left( \frac{1}{2} - p_{m,n}^t \right)$. Applying the Chernoff bound to \eqref{eqn:prob_WMV1} for $s>0$ yields an upper bound on the error probability, expressed as
\begin{align}
    \mathbb{P} \left[ \sum_{m=1}^M \! w_{m,n}^t \bar{Z}_{m,n}^t \!\! \ge \eta \right] & \! \le \min_{s>0} e^{-\eta s} \, \mathbb{E} \! \left[ \exp \! \left( \! s \! \sum_{m=1}^M \! w_{m,n}^t \bar{Z}_{m,n}^t \! \right) \! \right] \nonumber \\
    & \! = \min_{s>0} e^{-\eta s} \prod_{m=1}^M \mathbb{E} \left[ e^{s w_{m,n}^t \bar{Z}_{m,n}^t} \right].\label{eqn:prob_WMV2}
\end{align}
Leveraging the large deviation bound established in Lemma \ref{lem:1}, we obtain an upper bound on the expectation term in \eqref{eqn:prob_WMV2} as    
\begin{align}
    \mathbb{E} \! \left[ e^{s w_{m,n}^t \bar{Z}_{m,n}^t} \right] \! &= p_{m,n}^t e^{s w_{m,n}^t \left( 1 - p_{m,n}^t \right)} \!+\! \left( 1 \!-\! p_{m,n}^t \right) e^{-s w_{m,n}^t p_{m,n}^t} \nonumber \\
    & \le \exp \left[ \frac{1 - 2 p_{m,n}^t}{4 \ln \frac{1 - p_{m,n}^t}{p_{m,n}^t}} \left( w_{m,n}^t \right)^2 s^2 \right] \nonumber \\
    &= \exp \left[ \frac{1}{2} \left( \frac{1}{2} - p_{m,n}^t \right) w_{m,n}^t s^2 \right], \label{eqn:prob_WMV3}
\end{align}
where the last equality follows from $w_{m,n}^t=\ln \frac{1 - p_{m,n}^t}{p_{m,n}^t}$.
Invoking \eqref{eqn:prob_WMV3} into \eqref{eqn:prob_WMV2}, and also using $\eta = \sum_{m=1}^M w_{m,n}^t \left( \frac{1}{2} - p_{m,n}^t \right)$, the upper bound on the WMV decoding error probability becomes
\begin{align}
    \mathbb{P} \left[ U_n^t \! \ne \! {\hat U}_{{\sf WMV}, n}^t \right] 
    \! & \le \min_{s>0} e^{-\eta s} \exp \left[ \frac{s^2}{2} \! \sum_{m=1}^M \! \left( \frac{1}{2} \! - \! p_{m,n}^t \! \right) \! w_{m,n}^t \right] \nonumber \\
    & =  \min_{s>0} \exp \left( \frac{1}{2} \eta s^2 - \eta s \right) \nonumber \\
    & = \exp \left( - \frac{1}{2} \eta \right), \label{eqn:proof_thm4_last3}
\end{align}
where the last equality follows from the fact that $s = 1$ is the minimizer of the optimization problem in \eqref{eqn:proof_thm4_last3}. Then, the upper bound becomes
\begin{align}
    \mathbb{P} \left[ U_n^t \ne \hat{U}_{\mathsf{WMV}, n}^t \right] & \le \exp \left(- M \gamma^{\sf WMV}  \right), \label{eqn:proof_thm4_last2}
\end{align}
with 
\begin{align}
   \gamma^{\sf WMV} = \frac{1}{2M} \sum_{m=1}^M \left( \frac{1}{2} - p_{m,n}^t \right) \ln \frac{1 - p_{m,n}^t}{p_{m,n}^t}.
\end{align}
\end{proof}

\section{Proof of Lemma \ref{lem:1}} \label{sec:proof_lem1}
\begin{proof}
    The computation error probability $p_{m,n}^t$ is upper bounded by
    \begin{align}
        p_{m,n}^t &= \mathbb{P} \left[ Y_{m,n}^t \ne U_n^t \right] \nonumber \\
        & \le \mathbb{P} \left[ \left| g_{m,n}^t - \bar{g}_n^t \right| \ge \left| \bar{g}_n^t \right| \right] \nonumber \\
        &\mathop {\le}^{(a)} \frac{\mathbb{E} \left[ \left| g_{m,n}^t - \bar{g}_n^t \right| \right]}{\left| \bar{g}_n^t \right|} \nonumber \\
        & \mathop {\le}^{(b)} \frac{\sqrt{\mathbb{E} \left[ \left| g_{m,n}^t - \bar{g}_n^t \right|^2 \right]}}{\left| \bar{g}_n^t \right|} \nonumber \\
        & = \frac{\sigma}{\sqrt{B_m}},
    \end{align}
    where (a) and (b) follows from  Markov's and Jensen's inequalities. The last equality is due to Assumption \ref{ass:3}.
\end{proof}

\section{Proof of Corollary \ref{cor:2}} \label{sec:proof_cor2}
\begin{proof}
    Let us begin the proof with the decoding error bound of WMV in \eqref{eqn:WMV_errorbound}. If $0 < p_{m,n}^t < \frac{1}{4}$, we can obtain the upper bound as
    \begin{align} \label{eqn:cor2_1}
        P_\mathsf{E}^\mathsf{WMV} &\le \exp \left[ -\frac{1}{2} \sum_{m=1}^M \ln \frac{1 - p_{m,n}^t}{p_{m,n}^t} \left( \frac{1}{2} - p_{m,n}^t \right) \right] \nonumber \\
        &= \prod_{m=1}^M \left( \frac{p_{m,n}^t}{1 - p_{m,n}^t} \right)^{\frac{1}{2} \left( \frac{1}{2} - p_{m,n}^t \right)} \nonumber \\
        &\le \prod_{m=1}^M \left( \frac{4}{3} p_{m,n}^t \right)^{\frac{1}{2} \left( \frac{1}{2} - p_{m,n}^t \right)} \nonumber \\
        &\le \prod_{m=1}^M \left( \frac{4}{3} p_{m,n}^t \right)^\frac{1}{8}.
    \end{align}
    Leveraging the computation error bound in Lemma \ref{lem:1}, the upper bound in \eqref{eqn:cor2_1} becomes
    \begin{align}
        P_\mathsf{E}^\mathsf{WMV} &\le \prod_{m=1}^M \left( \frac{4}{3} \frac{\sigma}{\sqrt{B_m}} \right)^\frac{1}{8} \nonumber \\
        &= \left[ \frac{4}{3} \sigma \cdot \left( \prod_{m=1}^M \frac{1}{\sqrt{B_m}} \right)^\frac{1}{M} \right]^\frac{M}{8} \nonumber \\
        &= \exp \left[ -\frac{M}{8} \ln \left( \frac{3}{4} \frac{\sqrt{B_\mathsf{GM}}}{\sigma} \right) \right],
    \end{align}
    where $B_\mathsf{GM} = \left( \prod_{m=1}^M B_m \right)^\frac{1}{M}$ is the geometric mean of workers' batch sizes.
\end{proof}

\section{Proof of Theorem \ref{thm:3}} \label{sec:thm3}
\begin{proof}
Building upon the findings obtained in \eqref{eqn:prob_WMV2}, we commence our analysis by considering the upper bound of the error probability with imperfect knowledge of $\hat{w}_{m,n}^t$ as
\begin{align}
        \mathbb{P} \left[ U_n^t \ne \hat{U}_{{\sf WMV}, n}^t \right] &\le \mathbb{P} \left[ \sum_{m=1}^M \hat{w}_{m,n}^t \bar{Z}_{m,n}^t \ge \hat{\eta} \right], \nonumber \\
        &\le \min_{s>0} e^{-\hat{\eta} s} \prod_{m=1}^M \mathbb{E} \left[ e^{s \hat{w}_{m,n}^t \bar{Z}_{m,n}^t} \right], \label{eqn:im1}
    \end{align}
    where $\hat{\eta} = \sum_{m=1}^M \hat{w}_{m,n}^t \left( \frac{1}{2} - p_{m,n}^t\right)$. From Lemma \ref{lem:1}, we compute the upper bound of the expectation term in \eqref{eqn:im1} as
    \begin{align}
        \mathbb{E} \! \left[ e^{s \hat{w}_{m,n}^t \bar{Z}_{m,n}^t} \right] \!
        & \le \exp \left[ \frac{1 - 2 p_{m,n}^t}{4 w_{m,n}^t} \left( \hat{w}_{m,n}^t \right)^2 s^2 \right] \nonumber \\
        & = \exp \left[ \frac{1}{2} \left( \frac{1}{2} - p_{m,n}^t \right) \frac{\hat{w}_{m,n}^t}{w_{m,n}^t} \hat{w}_{m,n}^t s^2 \right] \nonumber \\
        &\mathop {\le}^{(a)} \exp \left[ \frac{1 + \delta_{\sf max}}{2} \! \left( \frac{1}{2} \!-\! p_{m,n}^t \! \right) \! \hat{w}_{m,n}^t s^2 \right], \label{eqn:im2}
    \end{align}
where (a) is due to Assumption \ref{ass:4}. Substituting \eqref{eqn:im2} into \eqref{eqn:im1}, we obtain
    \begin{align}
        & \mathbb{P} \left[ U_n^t \ne \hat{U}_{{\sf WMV}, n}^t \right] \nonumber \\
        & \le \min_{s>0} e^{-\hat{\eta} s} \exp \left[ \frac{(1 + \delta_{\sf max}) s^2}{2} \sum_{m=1}^M \left( \frac{1}{2} - p_{m,n}^t \right) \hat{w}_{m,n}^t \right] \nonumber \\
        & =  \min_{s>0} \exp \left( \frac{(1 + \delta_{\sf max}) \hat{\eta}}{2}  s^2 - \hat{\eta} s \right) \nonumber \\
        & = \exp \left( - \frac{1}{2 (1 + \delta_{\sf max})} \hat{\eta} \right), \label{eqn:im3}
    \end{align}
    where the last equality holds when $s = \frac{1}{1 + \delta_{\sf max}}$. Now, we derive a lower bound of $\hat{\eta}$ in terms of $\delta_{\rm min}$ as
    \begin{align}
      \hat{\eta} &=   \sum_{m=1}^M \hat{w}_{m,n}^t \left( \frac{1}{2} - p_{m,n}^t\right) \nonumber\\
      &= \sum_{m=1}^M \left( \frac{1}{2} - p_{m,n}^t \right) \frac{\hat{w}_{m,n}^t}{w_{m,n}^t} \cdot w_{m,n}^t \nonumber\\
      &\ge (1 - \delta_{\sf min}) \sum_{m=1}^M \left( \frac{1}{2} - p_{m,n}^t \right) w_{m,n}^t. \label{eqn:eta}
    \end{align}
   Invoking \eqref{eqn:eta} into \eqref{eqn:im3}, we obtain
    \begin{align}
        \mathbb{P} \left[ U_n^t \ne \hat{U}_{{\sf WMV}, n}^t \right] 
        & \le \exp \left[ - M \left(\frac{1 - \delta_{\sf min}}{1 + \delta_{\sf max}}  \right) \gamma^{\sf WMV} \right]. \label{eqn:im4}
    \end{align}
   This completes the proof. 
\end{proof}

}

\bibliographystyle{IEEEtran}
\bibliography{bibfile}

\begin{thebibliography}{10}
\providecommand{\url}[1]{#1}
\csname url@samestyle\endcsname
\providecommand{\newblock}{\relax}
\providecommand{\bibinfo}[2]{#2}
\providecommand{\BIBentrySTDinterwordspacing}{\spaceskip=0pt\relax}
\providecommand{\BIBentryALTinterwordstretchfactor}{4}
\providecommand{\BIBentryALTinterwordspacing}{\spaceskip=\fontdimen2\font plus
\BIBentryALTinterwordstretchfactor\fontdimen3\font minus
  \fontdimen4\font\relax}
\providecommand{\BIBforeignlanguage}[2]{{%
\expandafter\ifx\csname l@#1\endcsname\relax
\typeout{** WARNING: IEEEtran.bst: No hyphenation pattern has been}%
\typeout{** loaded for the language `#1'. Using the pattern for}%
\typeout{** the default language instead.}%
\else
\language=\csname l@#1\endcsname
\fi
#2}}
\providecommand{\BIBdecl}{\relax}
\BIBdecl

\bibitem{dean2012large}
J.~Dean, G.~Corrado, R.~Monga, K.~Chen, M.~Devin, M.~Mao, M.~Ranzato,
  A.~Senior, P.~Tucker, K.~Yang, Q.~Le, and A.~Ng, ``Large scale distributed
  deep networks,'' in \emph{Adv. Neural Inf. Process. Syst.}, vol.~25, Dec.
  2012, pp. 1223--1231.

\bibitem{bottou2010large}
L.~Bottou, ``Large-scale machine learning with stochastic gradient descent,''
  in \emph{Proc. Int. Conf. Comput. Statist. (COMPSTAT)}, Aug. 2010, pp.
  177--186.

\bibitem{mcmahan2017communication}
B.~McMahan, E.~Moore, D.~Ramage, S.~Hampson, and B.~A. y~Arcas,
  ``Communication-efficient learning of deep networks from decentralized
  data,'' in \emph{Proc. Int. Conf. Artif. Intell. Stat.}, vol.~54, Apr. 2017,
  pp. 1273--1282.

\bibitem{liu2020distributed}
J.~Liu, C.~Zhang \emph{et~al.}, ``Distributed learning systems with first-order
  methods,'' \emph{Found. Trends{\textregistered} Databases}, vol.~9, no.~1,
  pp. 1--100, 2020.

\bibitem{xu2021federated}
J.~Xu, B.~S. Glicksberg, C.~Su, P.~Walker, J.~Bian, and F.~Wang, ``Federated
  learning for healthcare informatics,'' \emph{J. Healthcare Inform. Res.},
  vol.~5, pp. 1--19, 2021.

\bibitem{hard2018federated}
A.~Hard, K.~Rao, R.~Mathews, S.~Ramaswamy, F.~Beaufays, S.~Augenstein,
  H.~Eichner, C.~Kiddon, and D.~Ramage, ``Federated learning for mobile
  keyboard prediction,'' 2018, arXiv:1811.03604.

\bibitem{cao2023communication}
X.~Cao, T.~Başar, S.~Diggavi, Y.~C. Eldar, K.~B. Letaief, H.~V. Poor, and
  J.~Zhang, ``Communication-efficient distributed learning: An overview,''
  \emph{IEEE J. Sel. Areas Commun.}, vol.~41, no.~4, pp. 851--873, 2023.

\bibitem{chen2021communication}
M.~Chen, N.~Shlezinger, H.~V. Poor, Y.~C. Eldar, and S.~Cui,
  ``Communication-efficient federated learning,'' in \emph{Proc. Nat. Acad.
  Sci.}, vol. 118, no.~17, 2021.

\bibitem{bernstein2018asignsgd}
J.~Bernstein, Y.-X. Wang, K.~Azizzadenesheli, and A.~Anandkumar, ``sign{SGD}:
  Compressed optimisation for non-convex problems,'' in \emph{Proc. Int. Conf.
  Mach. Learn.}, Jul. 2018, pp. 560--569.

\bibitem{bernstein2018bsignsgd}
J.~Bernstein, J.~Zhao, K.~Azizzadenesheli, and A.~Anandkumar, ``sign{SGD} with
  majority vote is communication efficient and fault tolerant,'' in \emph{Proc.
  Int. Conf. Learn. Representations}, May 2019.

\bibitem{karimireddy2019error}
S.~P. Karimireddy, Q.~Rebjock, S.~Stich, and M.~Jaggi, ``Error feedback fixes
  sign{SGD} and other gradient compression schemes,'' in \emph{Proc. Int. Conf.
  Mach. Learn.}, Jun. 2019, pp. 3252--3261.

\bibitem{seide20141}
F.~Seide, H.~Fu, J.~Droppo, G.~Li, and D.~Yu, ``1-bit stochastic gradient
  descent and its application to data-parallel distributed training of speech
  {DNN}s,'' in \emph{Proc. Annu. Conf. Int. Speech Commun. Assoc.
  (INTERSPEECH)}, Sep. 2014, pp. 1058--1062.

\bibitem{alistarh2017qsgd}
D.~Alistarh, D.~Grubic, J.~Li, R.~Tomioka, and M.~Vojnovic, ``{QSGD}:
  Communication-efficient {SGD} via gradient quantization and encoding,'' in
  \emph{Adv. Neural Inf. Process. Syst.}, vol.~30, Dec. 2017, pp. 1709--1720.

\bibitem{wen2017terngrad}
W.~Wen, C.~Xu, F.~Yan, C.~Wu, Y.~Wang, Y.~Chen, and H.~Li, ``Tern{G}rad:
  Ternary gradients to reduce communication in distributed deep learning,'' in
  \emph{Adv. Neural Inf. Process. Syst.}, vol.~30, Dec. 2017, pp. 1--13.

\bibitem{lee2020bayesian}
S.~Lee, C.~Park, S.-N. Hong, Y.~C. Eldar, and N.~Lee, ``Bayesian federated
  learning over wireless networks,'' 2020, arXiv:2012.15486.

\bibitem{park2021bayesian}
C.~Park, S.~Lee, and N.~Lee, ``Bayesian {AirComp} with sign-alignment precoding
  for wireless federated learning,'' in \emph{Proc. IEEE Global Commun. Conf.
  (GLOBECOM)}, Dec. 2021.

\bibitem{shlezinger2020uveqfed}
N.~Shlezinger, M.~Chen, Y.~C. Eldar, H.~V. Poor, and S.~Cui, ``{UVeQFed}:
  Universal vector quantization for federated learning,'' \emph{IEEE Trans.
  Signal Process.}, vol.~69, pp. 500--514, Dec. 2020.

\bibitem{gandikota2021vqsgd}
V.~Gandikota, D.~Kane, R.~K. Maity, and A.~Mazumdar, ``vq{SGD}: Vector
  quantized stochastic gradient descent,'' in \emph{Proc. Int. Conf. Artif.
  Intell. Statist.}, Apr. 2021, pp. 2197--2205.

\bibitem{eldar2022machine}
Y.~C. Eldar, A.~Goldsmith, D.~G{\"u}nd{\"u}z, and H.~V. Poor, \emph{Machine
  learning and wireless communications}.\hskip 1em plus 0.5em minus 0.4em\relax
  \!\!{C}ambridge, U.K.: Cambridge Univ. Press, 2022.

\bibitem{mishchenko2019distributed}
K.~Mishchenko, E.~Gorbunov, M.~Tak{\'a}{\v{c}}, and P.~Richt{\'a}rik,
  ``Distributed learning with compressed gradient differences,'' 2019,
  arXiv:1901.09269.

\bibitem{gorbunov2021marina}
E.~Gorbunov, K.~P. Burlachenko, Z.~Li, and P.~Richt{\'a}rik, ``{MARINA}: Faster
  non-convex distributed learning with compression,'' in \emph{Proc. Int. Conf.
  Mach. Learn.}, Jul. 2021, pp. 3788--3798.

\bibitem{jhunjhunwala2021adaptive}
D.~Jhunjhunwala, A.~Gadhikar, G.~Joshi, and Y.~C. Eldar, ``Adaptive
  quantization of model updates for communication-efficient federated
  learning,'' in \emph{Proc. IEEE Int. Conf. Acoust. Speech Signal Process.
  (ICASSP)}, Jun. 2021, pp. 3110--3114.

\bibitem{honig2022dadaquant}
R.~H{\"o}nig, Y.~Zhao, and R.~Mullins, ``{DAdaQuant}: Doubly-adaptive
  quantization for communication-efficient federated learning,'' in \emph{Proc.
  Int. Conf. Mach. Learn.}, Jul. 2022, pp. 8852--8866.

\bibitem{wangni2018gradient}
J.~Wangni, J.~Wang, J.~Liu, and T.~Zhang, ``Gradient sparsification for
  communication-efficient distributed optimization,'' in \emph{Adv. Neural Inf.
  Process. Syst.}, vol.~31, Dec. 2018, pp. 1299--1309.

\bibitem{shi2019understanding}
S.~Shi, X.~Chu, K.~C. Cheung, and S.~See, ``Understanding top-{K}
  sparsification in distributed deep learning,'' 2019, arXiv:1911.08772.

\bibitem{stich2018sparsified}
S.~U. Stich, J.-B. Cordonnier, and M.~Jaggi, ``Sparsified {SGD} with memory,''
  in \emph{Adv. Neural Inf. Process. Syst.}, vol.~31, Dec. 2018, pp.
  4448--4459.

\bibitem{rothchild2020fetchsgd}
D.~Rothchild, A.~Panda, E.~Ullah, N.~Ivkin, I.~Stoica, V.~Braverman,
  J.~Gonzalez, and R.~Arora, ``Fetch{SGD}: Communication-efficient federated
  learning with sketching,'' in \emph{Proc. Int. Conf. Mach. Learn.}, Jul.
  2020, pp. 8253--8265.

\bibitem{sahu2021rethinking}
A.~Sahu, A.~Dutta, A.~M~Abdelmoniem, T.~Banerjee, M.~Canini, and P.~Kalnis,
  ``Rethinking gradient sparsification as total error minimization,'' in
  \emph{Adv. Neural Inf. Process. Syst.}, vol.~34, Dec. 2021, pp. 8133--8146.

\bibitem{shanbhag2018efficient}
A.~Shanbhag, H.~Pirk, and S.~Madden, ``Efficient top-{K} query processing on
  massively parallel hardware,'' in \emph{Proc. Int. Conf. Manage. Data}, May
  2018, pp. 1557--1570.

\bibitem{li2022near}
S.~Li and T.~Hoefler, ``Near-optimal sparse allreduce for distributed deep
  learning,'' in \emph{Proc. ACM SIGPLAN Symp. Princ. Pract. Parallel
  Program.}, Apr. 2022, pp. 135--149.

\bibitem{lin2017deep}
Y.~Lin, S.~Han, H.~Mao, Y.~Wang, and W.~J. Dally, ``Deep gradient compression:
  Reducing the communication bandwidth for distributed training,'' in
  \emph{Proc. Int. Conf. Learn. Representations}, May 2017.

\bibitem{han2020adaptive}
P.~Han, S.~Wang, and K.~K. Leung, ``Adaptive gradient sparsification for
  efficient federated learning: An online learning approach,'' in \emph{Proc.
  IEEE Int. Conf. Distrib. Comput. Syst.}, Nov. 2020, pp. 300--310.

\bibitem{basu2019qsparse}
D.~Basu, D.~Data, C.~Karakus, and S.~Diggavi, ``Qsparse-local-{SGD}:
  Distributed {SGD} with quantization, sparsification and local computations,''
  in \emph{Adv. Neural Inf. Process. Syst.}, vol.~32, Dec. 2019, pp.
  14\,695--14\,706.

\bibitem{sattler2019robust}
F.~Sattler, S.~Wiedemann, K.-R. M{\"u}ller, and W.~Samek, ``Robust and
  communication-efficient federated learning from non-iid data,'' \emph{IEEE
  Trans. Neural Netw. Learn. Syst.}, vol.~31, no.~9, pp. 3400--3413, Sep. 2019.

\bibitem{park2023s}
C.~Park and N.~Lee, ``{$\mathsf{S}^3$GD-MV}: {Sparse-SignSGD} with majority
  vote for communication-efficient distributed learning,'' in \emph{Proc. IEEE
  Int. Symp. Inf. Theory (ISIT)}, Jun. 2023, pp. 2266--2271.

\bibitem{park2023sparse}
------, ``Sparse-{S}ign{SGD} with majority vote for communication-efficient
  distributed learning,'' 2023, arXiv:2302.07475.

\bibitem{pfeiffer2023federated}
K.~Pfeiffer, M.~Rapp, R.~Khalili, and J.~Henkel, ``Federated learning for
  computationally-constrained heterogeneous devices: A survey,'' \emph{ACM
  Comput. Surv.}, vol.~55, no. 14s, pp. 1--27, 2023.

\bibitem{recht2011hogwild}
B.~Recht, C.~Re, S.~Wright, and F.~Niu, ``Hogwild!: A lock-free approach to
  parallelizing stochastic gradient descent,'' in \emph{Adv. Neural Inf.
  Process. Syst.}, vol.~24, Dec. 2011, pp. 693--701.

\bibitem{zhang2015deep}
S.~Zhang, A.~E. Choromanska, and Y.~LeCun, ``Deep learning with elastic
  averaging {SGD},'' in \emph{Adv. Neural Inf. Process. Syst.}, vol.~28, Dec.
  2015, pp. 685--693.

\bibitem{xu2023asynchronous}
C.~Xu, Y.~Qu, Y.~Xiang, and L.~Gao, ``Asynchronous federated learning on
  heterogeneous devices: A survey,'' \emph{Comput. Sci. Rev.}, vol.~50, 2023.

\bibitem{wang2021novel}
J.~Wang, Q.~Liu, H.~Liang, G.~Joshi, and H.~V. Poor, ``A novel framework for
  the analysis and design of heterogeneous federated learning,'' \emph{IEEE
  Trans. Signal Process.}, vol.~69, pp. 5234--5249, 2021.

\bibitem{ferdinand2020anytime}
N.~Ferdinand, H.~Al-Lawati, S.~C. Draper, and M.~Nokleby, ``Anytime minibatch:
  Exploiting stragglers in online distributed optimization,'' in \emph{Proc.
  Int. Conf. Learn. Representations}, May 2020.

\bibitem{shi2022talk}
D.~Shi, L.~Li, M.~Wu, M.~Shu, R.~Yu, M.~Pan, and Z.~Han, ``To talk or to work:
  Dynamic batch sizes assisted time efficient federated learning over future
  mobile edge devices,'' \emph{IEEE Trans. Wireless Commun.}, vol.~21, no.~12,
  pp. 11\,038--11\,050, 2022.

\bibitem{park2022amble}
J.~Park, D.~Yoon, S.~Yeo, and S.~Oh, ``{AMBLE}: Adjusting mini-batch and local
  epoch for federated learning with heterogeneous devices,'' \emph{J. Parallel
  Distrib. Comput.}, vol. 170, pp. 13--23, 2022.

\bibitem{ma2023adaptive}
Z.~Ma, Y.~Xu, H.~Xu, Z.~Meng, L.~Huang, and Y.~Xue, ``Adaptive batch size for
  federated learning in resource-constrained edge computing,'' \emph{IEEE
  Trans. Mobile Comput.}, vol.~22, no.~1, pp. 37--53, 2023.

\bibitem{liu2023dynamite}
W.~Liu, X.~Zhang, J.~Duan, C.~Joe-Wong, Z.~Zhou, and X.~Chen, ``{DYNAMITE}:
  Dynamic interplay of mini-batch size and aggregation frequency for federated
  learning with static and streaming dataset,'' \emph{IEEE Trans. Mobile
  Comput.}, pp. 1--16, 2023.

\bibitem{hong2017weighted}
S.-N. Hong, S.~Kim, and N.~Lee, ``A weighted minimum distance decoding for
  uplink multiuser {MIMO} systems with low-resolution {ADC}s,'' \emph{IEEE
  Trans. Commun.}, vol.~66, no.~5, pp. 1912--1924, 2017.

\bibitem{kim2019supervised}
D.~Kim, S.-N. Hong, and N.~Lee, ``Supervised-learning for multi-hop {MU-MIMO}
  communications with one-bit transceivers,'' \emph{IEEE J. Sel. Areas
  Commun.}, vol.~37, no.~11, pp. 2559--2572, 2019.

\bibitem{li2014error}
H.~Li and B.~Yu, ``Error rate bounds and iterative weighted majority voting for
  crowdsourcing,'' \emph{arXiv preprint arXiv:1411.4086}, 2014, [Online].
  Available: https://arxiv.org/abs/1411.4086.

\bibitem{kim2023worker}
D.~Kim, J.~Lee, and H.~W. Chung, ``A worker-task specialization model for
  crowdsourcing: Efficient inference and fundamental limits,'' \emph{IEEE
  Trans. Inf. Theory}, 2023, early access.

\bibitem{berend2015finite}
D.~Berend and A.~Kontorovich, ``A finite sample analysis of the naive bayes
  classifier,'' \emph{J. Mach. Learn. Res.}, vol.~16, no.~1, pp. 1519--1545,
  2015.

\bibitem{kim2023distributed}
Y.~Kim, J.~Shin, Y.~Cassuto, and L.~R. Varshney, ``Distributed boosting
  classification over noisy communication channels,'' \emph{IEEE J. Sel. Areas
  Commun.}, vol.~41, no.~1, pp. 141--154, 2023.

\bibitem{jeon2018one}
Y.-S. Jeon, N.~Lee, S.-N. Hong, and R.~W. Heath, ``One-bit sphere decoding for
  uplink massive {MIMO} systems with one-bit {ADC}s,'' \emph{IEEE Trans.
  Wireless Commun.}, vol.~17, no.~7, pp. 4509--4521, 2018.

\bibitem{wu2021fast}
H.~Wu and P.~Wang, ``Fast-convergent federated learning with adaptive
  weighting,'' \emph{IEEE Trans. Cogn. Commun. Netw.}, vol.~7, no.~4, pp.
  1078--1088, 2021.

\bibitem{li2023revisiting}
Z.~Li, T.~Lin, X.~Shang, and C.~Wu, ``Revisiting weighted aggregation in
  federated learning with neural networks,'' in \emph{Proc. Int. Conf. Mach.
  Learn.}, Jul. 2023.

\bibitem{chen2020focus}
Y.~Chen, X.~Yang, X.~Qin, H.~Yu, B.~Chen, and Z.~Shen, ``{FOCUS}: Dealing with
  label quality disparity in federated learning,'' 2020, arXiv:2001.11359.

\bibitem{guo2020secure}
J.~Guo, Z.~Liu, K.-Y. Lam, J.~Zhao, Y.~Chen, and C.~Xing, ``Secure weighted
  aggregation for federated learning,'' 2020, arXiv:2010.08730.

\bibitem{balles2018dissecting}
L.~Balles and P.~Hennig, ``Dissecting {ADAM}: The sign, magnitude and variance
  of stochastic gradients,'' in \emph{Proc. Int. Conf. Mach. Learn.}, Jul.
  2018, pp. 404--413.

\bibitem{ma2023understanding}
A.~Ma, Y.~Pan, and A.-M. Farahmand, ``Understanding the robustness difference
  between stochastic gradient descent and adaptive gradient methods,''
  \emph{Trans. Mach. Learn. Res.}, Nov. 2023.

\bibitem{safaryan2021stochastic}
M.~Safaryan and P.~Richt{\'a}rik, ``Stochastic sign descent methods: New
  algorithms and better theory,'' in \emph{Proc. Int. Conf. Mach. Learn.}, Jul.
  2021, pp. 9224--9234.

\bibitem{zinkevich2010parallelized}
M.~Zinkevich, M.~Weimer, L.~Li, and A.~Smola, ``Parallelized stochastic
  gradient descent,'' in \emph{Adv. Neural Inf. Process. Syst.}, vol.~23, Dec.
  2010, pp. 2595--2603.

\bibitem{meng2019convergence}
Q.~Meng, W.~Chen, Y.~Wang, Z.-M. Ma, and T.-Y. Liu, ``Convergence analysis of
  distributed stochastic gradient descent with shuffling,''
  \emph{Neurocomputing}, vol. 337, pp. 46--57, 2019.

\bibitem{kearns2013large}
M.~Kearns and L.~Saul, ``Large deviation methods for approximate probabilistic
  inference,'' in \emph{Proc. Conf. Uncertainty Artif. Intell.}, 1998.

\bibitem{yue2022federated}
K.~Yue, R.~Jin, C.-W. Wong, and H.~Dai, ``Federated learning via plurality
  vote,'' \emph{IEEE Trans. Neural Netwo. Learn. Syst.}, pp. 1--14, 2022.

\bibitem{lecun1998gradient}
Y.~LeCun, L.~Bottou, Y.~Bengio, and P.~Haffner, ``Gradient-based learning
  applied to document recognition,'' \emph{Proc. IEEE}, vol.~86, no.~11, pp.
  2278--2324, 1998.

\bibitem{krizhevsky2009learning}
A.~Krizhevsky and G.~Hinton, ``Learning multiple layers of features from tiny
  images,'' Univ. Toronto, Tech. Rep., 2009.

\bibitem{he2016deep}
K.~He, X.~Zhang, S.~Ren, and J.~Sun, ``Deep residual learning for image
  recognition,'' in \emph{Proc. IEEE Comput. Soc. Conf. Comput. Vis. Pattern
  Recognit.}, 2016, pp. 770--778.

\end{thebibliography}

\vfill

\end{document}